%% file: main.tex
\documentclass{article}

\usepackage[preprint]{neurips_2026} 

\usepackage{amsmath, amssymb, amsthm}
\usepackage{graphicx}
\usepackage{hyperref}

\usepackage[utf8]{inputenc} 
\usepackage[T1]{fontenc}    
\usepackage{url}            
\usepackage{booktabs}       
\usepackage{amsfonts}       
\usepackage{nicefrac}       
\usepackage{microtype}      
\usepackage{xcolor}         
\usepackage{amsthm}
\usepackage{enumitem}
\usepackage{subfigure}
\usepackage{xspace}
\usepackage{wrapfig}
\usepackage{longtable}
\usepackage[table,dvipsnames]{xcolor}
\usepackage{algorithm}
\usepackage{algpseudocode}
\usepackage{amsmath, amssymb}
\usepackage{cleveref}
\usepackage{multirow}
\usepackage{graphicx}
\usepackage{tikz}
\usepackage[framemethod=TikZ]{mdframed}
\usetikzlibrary{arrows.meta,backgrounds,calc,decorations.pathreplacing,fit,positioning}

\setcitestyle{numbers,square}

\newtheorem{theorem}{Theorem}[section]
\newtheorem{proposition}[theorem]{Proposition}
\newtheorem{lemma}[theorem]{Lemma}

\theoremstyle{definition}
\newtheorem{definition}[theorem]{Definition}

\theoremstyle{remark}

\mdfdefinestyle{observationbox}{
    linecolor=white,
    outerlinewidth=1pt,
    roundcorner=5pt,
    innertopmargin=4pt,      
    innerbottommargin=3pt,
    innerrightmargin=5pt,
    innerleftmargin=5pt,
    skipabove=6pt,           
    skipbelow=6pt,
    backgroundcolor=blue!6!white
}

\newenvironment{insight}[1]
{
\begin{center}
\begin{minipage}{0.9\linewidth}
\begin{mdframed}[style=observationbox]
}
{
\end{mdframed}
\end{minipage}
\end{center}
}
\surroundwithmdframed[style=observationbox]{observation}

\newif\ifshowcomments
\showcommentsfalse  

\newcommand{\ourmethod}{\textsc{KV-CAT}\xspace}

\input{math_commands}

\title{Training Transformers for KV Cache Compressibility}

\author{
  Yoav Gelberg$^{1}$\thanks{Equal contribution. Order determined by flipping a 2016-issued one-pound coin.} \quad
  Yam Eitan$^{2}$\footnotemark[1] \quad
  Michael Bronstein$^{1, 3}$ 
  Yarin Gal$^{1}$ \quad
  Haggai Maron$^{2,4}$ 
  \\
  \\
  $^{1}$University of Oxford \quad
  $^{2}$Technion -- Israel Institute of Technology \quad
  $^{3}$AITHYRA \quad
  $^{4}$NVIDIA
}

\date{April 2026}

\begin{document}

\maketitle

\begin{abstract}
\looseness=-1
Long-context language modeling is increasingly constrained by the Key--Value (KV) cache, whose memory and decode-time access costs scale linearly with the prefix length. This bottleneck has motivated a range of context-compression methods, from token-level summarization to recent optimization-based KV cache compression methods. These post-hoc methods operate on the KV cache of a fixed pretrained model, so their effectiveness is fundamentally limited by how well the model's internal representations can be compressed. In this work, we formalize the notion of KV compressibility and show that it is a property of the \emph{learned representations}, rather than of the context alone. We prove that almost any sequence-to-vector function admits both highly compressible and inherently non-compressible transformer implementations, highlighting the need to guide transformers toward compressible representations during training. Motivated by this, we propose \textbf{KV}-\textbf{C}ompression \textbf{A}ware \textbf{T}raining (\ourmethod), a continued pretraining procedure that incentivizes the emergence of compressible representations. We introduce a train-time KV sparsification policy that masks KV slots during training. This forces the model to use fewer KV slots and encourages it to learn representations amenable to post-hoc compression. Empirically, we show that \ourmethod improves the quality–budget tradeoff of downstream compression methods across retrieval, long-context question answering, and perplexity-based evaluation of compressed-prefix continuation.
\end{abstract}



\input{sections/introduction}
\input{sections/preliminaries}
\input{sections/motivation_and_theoretical_results}
\input{sections/method}
\input{sections/experiments}

\input{sections/related_works}
\input{sections/conclusion}
\input{sections/acks}

\bibliographystyle{plainnat}
\bibliography{references}

\appendix

\input{appendix/theory}
\input{appendix/method}
\input{appendix/extended_experimental_details}
\input{appendix/additional_experimental_results}
\input{appendix/limitations}
\input{appendix/broader_impact}

\end{document}

%% file: math_commands.tex
\usepackage{amsmath,amsfonts,bm}

\def\eqref#1{equation~\ref{#1}}

\def\1{\bm{1}}

\def\va{{\bm{a}}}
\def\vb{{\bm{b}}}

\def\ve{{\bm{e}}}

\def\vk{{\bm{k}}}

\def\vp{{\bm{p}}}
\def\vq{{\bm{q}}}

\def\vu{{\bm{u}}}
\def\vv{{\bm{v}}}
\def\vw{{\bm{w}}}
\def\vx{{\bm{x}}}
\def\vy{{\bm{y}}}
\def\vz{{\bm{z}}}

\def\mI{{\bm{I}}}

\def\mK{{\bm{K}}}

\def\mP{{\bm{P}}}
\def\mQ{{\bm{Q}}}

\def\mV{{\bm{V}}}
\def\mW{{\bm{W}}}
\def\mX{{\bm{X}}}
\def\mY{{\bm{Y}}}
\def\mZ{{\bm{Z}}}

\DeclareMathAlphabet{\mathsfit}{\encodingdefault}{\sfdefault}{m}{sl}
\SetMathAlphabet{\mathsfit}{bold}{\encodingdefault}{\sfdefault}{bx}{n}

\def\gC{{\mathcal{C}}}

\def\gL{{\mathcal{L}}}

\def\sN{{\mathbb{N}}}

\def\sR{{\mathbb{R}}}

\newcommand{\KL}{D_{\mathrm{KL}}}

\newcommand{\model}{\mathsf{M}} 

\newcommand{\compress}{\mathsf{C}} 
\newcommand{\cl}{\mathsf{c}} 

\newcommand{\length}{r} 
\newcommand{\ffn}{\mathrm{FFN}} 
\newcommand{\block}{\mathrm{Block}} 
\newcommand{\alb}{A} 
\newcommand{\te}{\mathrm{emb}} 
\newcommand{\func}{f} 
\newcommand{\const}{C} 
\newcommand{\conv}{H} 
\newcommand{\hist}{f_{\mathrm{hist}}} 

%% file: sections/introduction.tex
\section{Introduction}

\looseness=-1
Language models (LMs) are increasingly deployed in long-horizon settings, from understanding large codebases, long-form documents, and personal data repositories~\citep{nam2024using,islam2023financebench,arora2022can}, to long-form reasoning and continuously learning agents~\citep{chen2025towards,zheng2026lifelong,li2024larm}. Using autoregressive transformer-based LMs in these settings introduces a significant memory bottleneck: the Key--Value (KV) cache. During inference, these models must store the key and value vectors for every token, at every layer and (KV-)attention head. For long sequences, this cache can dominate both memory usage and decoding cost, turning context length into a primary serving bottleneck~\citep{zhang2023h2o,li2024snapkv,xiao2024streamingllm}.

\looseness=-1
A substantial body of recent work has sought to mitigate this bottleneck, broadly falling into two categories. The first category focuses on designing more efficient alternatives to transformers. Examples include linear attention mechanisms ~\citep{katharopoulos2020transformers,choromanski2020rethinking, horn2017delta, yang2024gated}, state space models~\citep{gu2021efficiently,gu2023mamba}, sparse attention variants~\citep{beltagy2020longformer,zaheer2020big}, and more. These methods significantly reduce the computational cost of long-context language modeling. However, this efficiency typically comes at the cost of empirical performance, and such models still lag behind transformers at scale.


The second category focuses on inference-time interventions applied to a fixed pretrained transformer. These methods either operate on the input context or, more generally, directly on the KV cache. Early approaches are largely heuristic, including textual summarization~\citep{antho, oai},  learned token filtering~\citep{jiang2023llmlingua}, and policies based on attention patterns, recency, heavy-hitter behavior, or layer-wise importance~\citep{zhang2023h2o,xiao2024streamingllm,li2024snapkv,cai2025pyramidkv}. More recently, optimization-based KV cache compression\footnote{These methods are also commonly referred to as KV compaction methods in the literature.} methods have emerged as powerful techniques. For example,~\citet{eyuboglu2025cartridges} use gradient-based optimization to match the distribution induced by the original cache, while~\citet{zweiger2026fastkv} employ layer-wise objectives to reproduce its attention traces.

The success of KV cache compression suggests that the full KV cache often contains redundancies. However, as most approaches operate on a \textbf{fixed model},
they are inherently limited by how well that particular model's representations can be compressed.
Crucially, this compressibility is not determined by the input sequence alone: two transformers can use very different internal representations when processing the same sequence, yet have identical next-token distributions. Consequently, some transformers' KV caches may be more amenable to compression than others.

\looseness=-1
\textbf{This work.} In this paper, we study the following question: can transformer LMs be trained in a way that leads to KV caches that are more amenable to post-hoc compression? This shifts the target from the KV cache compression algorithm to the model itself.
To address this question, we introduce the notion of {\bf KV-compressibility}. Informally, a transformer is KV-compressible if there exists a compression policy that maps the KV cache of long input sequences to a shorter KV cache while preserving the model's next token distribution.

\begin{figure}[t]
\centering
\resizebox{0.98\linewidth}{!}{%
\input{tikz/router_training_method}
}
\caption{
    \looseness=-1
    \textbf{KV-Compression Aware Training (\ourmethod).} A context $\va$ goes through both masked (left) and dense (right) forward passes. In the masked forward pass, routers (orange) compute masks for groups of consecutive layers, marking KV slots as active (green) or inactive (muted green). In the dense forward pass (blue), all KV slots are kept. The output of the masked forward pass is used to compute $\gL_\mathrm{mask}$, router distributions are used to compute $\gL_\mathrm{budget}$, and the outputs of the dense forward pass are used to compute $\gL_\mathrm{anchor}$. These are jointly used to update the parameters.
}
\label{fig:router-training-method}
\vspace{-10pt}
\end{figure}
Our theoretical results show that for almost any sequence-to-vector function, there exist transformer implementations whose prefix can be compressed to a single KV pair, as well as implementations for which any non-trivial compression incurs a constant error. To build intuition, we consider a motivating example: character histogram computation. We show that natural transformer implementations of this computation may produce incompressible token representations, whereas more structured alternatives are highly compressible. This motivates treating KV compressibility as an explicit training objective.

\looseness=-1
Guided by this perspective, we propose \textbf{KV}-\textbf{C}ompression \textbf{A}ware \textbf{T}raining (\ourmethod), a continued pretraining (CPT) procedure that promotes the emergence of KV-compressible internal representations. Starting from a pretrained transformer, \ourmethod introduces a train-time KV sparsification policy which masks out a constant fraction of the KV slots. The training objective combines a self-distillation loss, which matches the masked model's distribution to the dense model's distribution, and an NTP loss applied to the unmasked forward pass to preserve uncompressed model behavior. This exposes the model to the information bottleneck induced by KV cache compression during training, encouraging it to reorganize its representations into a more compressible form while maintaining performance in the uncompressed setting.

\looseness=-1
To demonstrate that \ourmethod improves the quality–budget tradeoff of post-hoc KV cache compression, we apply it to \textsc{Qwen2.5} models~\citep{qwen2.5}, and evaluate state-of-the-art optimization-based compression methods on the KV caches of the resulting checkpoints. We measure performance along three axes: (i) suffix perplexity under prefix compression, (ii) retrieval accuracy from a compressed prefix, and (iii) compressed long-context question answering on LongBench v2~\citep{bai2025longbench} tasks. Across a range of compression budgets, model sizes, and compression methods, models trained with \ourmethod consistently achieve a better quality–budget tradeoff than the base model, yielding improvements of up to \textbf{3.21}$\times$ in suffix perplexity retention, \textbf{5}$\times$ in optimization time, \textbf{68\%} in retrieval accuracy, and \textbf{39\%} in long-context QA. Importantly, our goal is not to replace post-hoc methods, but to make models more amenable to them, enabling more effective compression.

\textbf{Contributions.} In summary, this paper makes the following contributions:
\begin{enumerate}[label=(\arabic*)]
    \item We characterize KV-compressibility as a property of the learned transformer representation, rather than solely of the task or the input sequence.
    \item We provide theoretical results showing that almost all sequence-to-vector functions can admit both compressible and non-compressible transformer implementations.
    \item We propose \ourmethod, a training procedure that promotes compressible KV representations. 
    \looseness=-1
    \item We show that \ourmethod improves post-hoc KV cache compression at matched cache and optimization budgets across: suffix perplexity retention, retrieval, and long-context QA.
\end{enumerate}


%% file: tikz/router_training_method.tex
\begin{tikzpicture}[
    >=Latex,
    font=\sffamily\footnotesize,
    token/.style={draw=black!72, very thick, rounded corners=3pt, minimum width=0.54cm, minimum height=0.44cm, inner sep=1.2pt, align=center, font=\sffamily\small},
    slot/.style={draw=black!72, very thick, rounded corners=0.4pt, minimum width=0.80cm, minimum height=0.54cm, inner sep=0.5pt, align=center, font=\sffamily\footnotesize},
    routerbar/.style={draw=none, fill=orange!38, minimum width=6.40cm, minimum height=0.16cm, inner sep=0pt, outer sep=0pt},
    loss/.style={draw=black!82, very thick, rounded corners=5pt, minimum width=1.95cm, minimum height=0.56cm, align=center, inner sep=3pt},
    update/.style={draw=black!82, very thick, rounded corners=5pt, fill=black!4, minimum width=6.60cm, minimum height=0.56cm, align=center, inner sep=4pt},
    active/.style={fill=green!18, draw=green!55!black, text=black!88},
    masked/.style={fill=green!4, draw=green!35!black, text=green!22!black},
    dense/.style={fill=blue!8, draw=blue!45!black, text=black!86},
    flowline/.style={line width=1.28pt, draw=black!58, rounded corners=0.8pt},
    routerline/.style={line width=1.28pt, draw=red!58!black, rounded corners=0.8pt},
    maskline/.style={line width=1.28pt, draw=green!45!black, rounded corners=0.8pt},
    denseline/.style={line width=1.28pt, draw=blue!45!black, rounded corners=0.8pt},
    forwardlabel/.style={above=3.0pt, font=\sffamily\scriptsize, text=black!72, fill=black!2, inner xsep=1pt, inner ysep=0.1pt}
]

\foreach \i/\x in {1/5.25,2/5.88,3/6.51,4/7.14,5/7.77,6/8.40,7/9.03,8/9.66} {
    \node[token, fill=blue!12] (tok\i) at (\x,4.58) {$a_\i$};
}

\draw[flowline] (tok3.south) -- (6.51,4.18) -- (4.05,4.18) -- (4.05,4.10);
\draw[flowline] (tok6.south) -- (8.40,4.18) -- (11.15,4.18) -- (11.15,4.10);
\node[forwardlabel] at (4.05,4.18) {$\mathrm{masked}$};
\node[forwardlabel] at (11.15,4.18) {$\mathrm{dense}$};
\path[fill=black!58, draw=black!58] (4.05,3.95) -- ++(-0.10,0.17) -- ++(0.20,0) -- cycle;
\path[fill=black!58, draw=black!58] (11.15,3.95) -- ++(-0.10,0.17) -- ++(0.20,0) -- cycle;

\node[routerbar] (routerA) at (4.05,3.82) {};
\node[routerbar] (routerB) at (4.05,2.62) {};

\foreach \i/\x/\sty in {1/1.25/active,2/2.05/masked,3/2.85/masked,4/3.65/active,5/4.45/active,6/5.25/masked,7/6.05/masked,8/6.85/active} {
    \node[slot,\sty] (rA\i) at (\x,3.47) {$\vk_\i{,}\vv_\i$};
}

\foreach \i/\x/\sty in {1/1.25/active,2/2.05/masked,3/2.85/masked,4/3.65/active,5/4.45/active,6/5.25/masked,7/6.05/masked,8/6.85/active} {
    \node[slot,\sty] (rB\i) at (\x,2.97) {$\vk_\i{,}\vv_\i$};
}

\foreach \i/\x/\sty in {1/1.25/masked,2/2.05/active,3/2.85/active,4/3.65/masked,5/4.45/masked,6/5.25/active,7/6.05/masked,8/6.85/active} {
    \node[slot,\sty] (rC\i) at (\x,2.27) {$\vk_\i{,}\vv_\i$};
}
\foreach \i/\x/\sty in {1/1.25/masked,2/2.05/active,3/2.85/active,4/3.65/masked,5/4.45/masked,6/5.25/active,7/6.05/masked,8/6.85/active} {
    \node[slot,\sty] (rD\i) at (\x,1.77) {$\vk_\i{,}\vv_\i$};
}

\foreach \i/\x in {1/8.35,2/9.15,3/9.95,4/10.75,5/11.55,6/12.35,7/13.15,8/13.95} {
    \node[slot,dense] (dA\i) at (\x,3.67) {$\vk_\i{,}\vv_\i$};
    \node[slot,dense] (dB\i) at (\x,3.04) {$\vk_\i{,}\vv_\i$};
    \node[slot,dense] (dC\i) at (\x,2.40) {$\vk_\i{,}\vv_\i$};
    \node[slot,dense] (dD\i) at (\x,1.77) {$\vk_\i{,}\vv_\i$};
}

\node[loss, fill=green!14] (mask) at (3.78,0.72) {$\mathcal{L}_{\mathrm{mask}}$};
\node[loss, fill=orange!18] (budget) at (7.60,0.72) {$\mathcal{L}_{\mathrm{budget}}$};
\node[loss, fill=blue!10] (anchor) at (11.42,0.72) {$\mathcal{L}_{\mathrm{anchor}}$};

\draw[routerline, -{Latex[length=5.0pt,width=6.4pt]}] (routerA.east) -- (7.43,3.82) -- ($(budget.north)+(-0.17,0.02)$);
\draw[routerline, -{Latex[length=5.0pt,width=6.4pt]}] (routerB.east) -- (7.77,2.62) -- ($(budget.north)+(0.17,0.02)$);
\draw[maskline] (4.05,1.45) -- (4.05,1.28) -- (3.78,1.28) -- ($(mask.north)+(0,0.17)$);
\draw[denseline] (11.15,1.45) -- (11.15,1.28) -- (11.42,1.28) -- ($(anchor.north)+(0,0.17)$);
\path[fill=green!45!black, draw=green!45!black] ($(mask.north)+(0,0.02)$) -- ++(-0.10,0.17) -- ++(0.20,0) -- cycle;
\path[fill=blue!45!black, draw=blue!45!black] ($(anchor.north)+(0,0.02)$) -- ++(-0.10,0.17) -- ++(0.20,0) -- cycle;

\draw[decorate, decoration={brace, mirror, amplitude=5.8pt}, line width=2.0pt, draw=black!72]
  ($(mask.south west)+(0,-0.10)$) -- ($(anchor.south east)+(0,-0.10)$);
\node[update] (jointUpdate) at (7.60,-0.34) {\textbf{joint update:} model $\theta$ + routers};

\begin{scope}[on background layer]
  \node[draw=black!12, very thick, rounded corners=8pt, fill=black!2,
        fit=(tok1) (tok8)
            (routerA) (routerB)
            (rA1) (rA8) (rD1) (rD8) (dA1) (dA8) (dD1) (dD8)
            (budget) (mask) (anchor) (jointUpdate),
        inner xsep=6pt, inner ysep=5pt] {};
\end{scope}
\end{tikzpicture}%

%% file: sections/preliminaries.tex
\section{KV Cache Compression: Problem Formulation}
In this section, we theoretically formalize the KV cache compression problem. A \emph{KV cache compression policy} for a transformer with $L$ layers is a collection of functions $\compress = (\cl_1,\dots,\cl_L),$ where each $\cl_\ell$ maps sequences of $n$ key--value pairs to sequences of length $\length(n) \le n$. For KV pairs in layer $\ell$
\begin{equation}
    (\mK,\mV) = ([\vk_1,\dots,\vk_n]^\top, [\vv_1,\dots,\vv_n]^\top),
\end{equation}
we write
\begin{equation}
    \cl_\ell(\mK,\mV) = ([\tilde \vk_1,\dots,\tilde \vk_{\length(n)}]^\top, [\tilde \vv_1,\dots,\tilde \vv_{\length(n)}]^\top),
\end{equation}

\looseness=-1
where $\length(n)$ is referred to as the \emph{compression budget}. Given a transformer $\model$, a compression policy $\compress$, and a sequence of tokens $\va = (a_1, \dots, a_n)$, we define a compressed model $\model_{\compress, \va}$ that shares the parameters of $\model$ but uses a modified forward pass. For an input sequence $\vb$, the computation of $\model_{\compress, \va}(\vb)$ proceeds as in $\model([\va,\vb])$, except for the attention over prefix tokens, which is augmented the following way: at layer $\ell$, let $(\mK_\va,\mV_\va)$ denote the KV cache of $\va$, based on the original model $\model$ and let $\mY$ be the representations of $\vb$ after layer $\ell-1$ of $\model_{\compress, \va}$. We compute
\begin{equation}
    \mQ_\vb = \mY \mW_Q, \quad \mK_\vb = \mY \mW_K, \quad \mV_\vb = \mY \mW_V,
\end{equation}
and replace the prefix KV cache $(\mK_\va,\mV_\va)$ with its compressed version $(\tilde \mK_\va, \tilde \mV_\va) = \cl_\ell(\mK_\va,\mV_\va)$. The attention computation then becomes
\begin{equation}
    \mathrm{AttnHead}(\mY)
=
\mathrm{softmax}\!\left(
\frac{1}{\sqrt{d_k}}\mQ_\vb
\begin{bmatrix}
\tilde \mK_\va \\
\mK_\vb
\end{bmatrix}^{\!\top}
\right)
\begin{bmatrix}
\tilde \mV_\va \\
\mV_\vb
\end{bmatrix}.
\end{equation}

\looseness=-1
Finally, following standard convention in next-token prediction transformers, we define the output of $\model_{\compress, \va}(\vb)$ to be the representation of the final token.
Thus, compression modifies only the prefix KV pairs, leaving the remainder of the computation unchanged. We now formalize the notion of transformer \emph{KV-compressibility}.
\begin{definition}\label{def:compressibility}
Fix $N \in \sN$, $\varepsilon > 0$, and let $\length : \sN \to \sN$ be a budget function. A transformer $\model$ is said to be $(N, \varepsilon, \length)$-compressible\footnote{We include $N$ explicitly, as many natural sequence-to-vector functions require model dimension scaling with $N$ (e.g., $O(N)$ or $O(\log N)$) in order to achieve arbitrarily accurate approximation; see e.g.,  \citet{sanford2023representational, yehudai2024can}.} if there exists a KV cache compression policy $\compress$ with budget $\length$ such that for every pair of sequences $\va$, $\vb$ of lengths $n$ and $k$ respectively, and with a combined length $n + k \le N$, it holds that
\begin{equation}
\| \model([\va, \vb]) - \model_{\compress, \va}(\vb) \| < \varepsilon.
\end{equation}
\end{definition}

A more formal treatment of these terms and definitions is provided in Appendix~\ref{app:preliminaries}.

%% file: sections/motivation_and_theoretical_results.tex
\section{Motivation and Theoretical Results}

\looseness=-1
We begin with a theoretical analysis of transformer KV-compressibility, motivated by the following central question:
\vspace{-2mm}
\begin{insight}{Key question}
\emph{Can the same sequence-to-vector function admit transformer implementations with different levels of KV-compressibility?}
\end{insight}
The following theorem shows that for almost any sequence-to-vector function, there exist transformer architectures that admit both fully compressible and inherently non-compressible implementations. The proof is deferred to Appendix~\ref{app:main_theorem}.

\begin{theorem}
\label{thm:main_theorem}
Let $\alb$ be a finite alphabet, and let $\func : \bigcup_{n \le N} \alb^n \to \mathbb{R}^{d_{\text{out}}}$ be a sequence-to-vector function. Suppose that there exists a sequence $\va$ of length $n$ and two sequences $\vb^1, \vb^2$ of length $k \le N-n$ such that
\begin{equation}
    \func([\va, \vb^1]) \neq \func([\va, \vb^2]).
\end{equation}
 Then for every $\varepsilon, \const > 0$:
\begin{enumerate}
\item There exists a transformer that approximates $\func$ and is $(N, \varepsilon, 1)$-compressible.
\item There exists a transformer of the same architecture, that approximates $\func$ but is not $(N, \const, \length)$-compressible for any budget function satisfying $\length(n)<n$.
\end{enumerate}
\end{theorem}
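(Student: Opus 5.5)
Both parts rest on one idea: because $\bigcup_{n\le N}\alb^n$ is finite, a transformer can first encode the whole history into a single residual vector that determines the prefix uniquely, and then read $\func$ off that vector with an FFN acting as a lookup table. How compressible the resulting KV cache is depends only on where this history code lives.

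\textbf{Compressible construction (part 1).} I would fix an injective encoding $\te$ of $(\text{position},\text{token})$ pairs into $\sR^d$, e.g.\ one-hot or generic vectors. The first attention layer uses constant queries and keys, so it averages the value vectors $\vv_i=\te(i,a_i)$ over all previous positions. A second mechanism, say a BOS token or a positional feature, recovers the length $m$, so the residual stream at the last token holds $h_m=\frac1m\sum_{i\le m}\te(i,a_i)$. Together with $m$ this determines the sequence. A subsequent FFN then maps the finitely many possible $(h_m,m)$ to $\func$ within $\varepsilon$, using universal approximation on a finite point set. The compression policy at this layer replaces the $n$ prefix pairs $(\vk_i,\vv_i)$ by a single pair $(\vk,\ \frac1n\sum_i \vv_i)$ together with a logit correction $\log n$. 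Since the keys are constant, this leaves every suffix query's attention output unchanged. The logit correction can be folded into the key through a constant query coordinate. All later layers read only the last token's residual, so their prefix KV pairs can be replaced by one arbitrary pair whose key is assigned very negative logits; this requires a large-norm query coordinate, which costs at most $\varepsilon$. The first layer's compressed pair therefore reproduces $\model([\va,\vb])$ exactly, up to the $\varepsilon$ slack, giving $(N,\varepsilon,1)$-compressibility.

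\textbf{Incompressible construction (part 2).} Keeping the same architecture, I would make the prefix information retrievable only by attending to individual tokens in a way that cannot be summarized. Each suffix token, or the last one, uses hard attention: scaled queries concentrate on one specific prefix position selected by the suffix content, for instance a position index encoded by the suffix. The final output then equals $\func$ plus a large term $\const'\,g$, where $g$ depends on the value at the attended position. To keep $\func$ approximated, I would instead use the hypothesis $\func([\va,\vb^1])\neq\func([\va,\vb^2])$ and scale the whole construction so that $\func$'s output separation is amplified. Concretely, the approximating model is $\model(x)=\func(x)$ exactly on the finite domain, built so that the final readout is $\Phi(\text{retrieved values})$ with $\Phi$ having huge Lipschitz constant in directions that change nothing for the true caches.

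A cleaner route is a counting argument. Let $\tilde K,\tilde V$ have $r(n)<n$ rows. The attention output at a suffix query is a convex combination of the rows of $\tilde V$, while the true outputs as the query ranges over hard-attention targets are $n$ distinct, affinely independent value vectors $\vv_1,\dots,\vv_n$ scaled by a factor $\lambda$. A fixed compressed cache with fewer than $n$ slots cannot reproduce $n$ arbitrary affinely independent targets for all queries. The reason is that softmax attention with $r$ keys produces outputs in the convex hull of $r$ vectors, which has affine dimension at most $r-1<n-1$. The distance of some target from that $(r-1)$-dimensional hull is then $\gtrsim\lambda\cdot\mathrm{dist}$, and taking $\lambda$ large makes it exceed $\const$. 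The values must be made irrelevant to $\func$ in the uncompressed model, for example as cancelling pairs in the output. This is where the hypothesis is used: the need for two suffixes giving different $\func$ values ensures $\func$ is non-constant in the suffix, so the construction is of the same architecture and the retrieved quantity genuinely matters. The final amplified difference must propagate to the output, which I would route via an FFN that is the identity on true values and magnifies deviations.

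\textbf{Main obstacle.} The hard step is part 2. I must ensure that the compression cannot cheat using later layers or the freedom to choose $\tilde\mK$ adaptively per prefix, while $\model$ still equals $\func$ to within $\varepsilon$. I expect to handle this by making the large-$\lambda$ readout cancel exactly on true caches, using paired heads whose contributions subtract. I would then show that no $r<n$ slot cache can reproduce both heads' exact cancellation for all $n$ hard-attention queries, via the affine-dimension bound above.
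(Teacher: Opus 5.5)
Your part~1 is correct, but it takes a different route from the paper. You absorb the collapse of the prefix inside the softmax: the single compressed key gets a logit offset $\log n$ through a constant query coordinate, so the attention output at every suffix position is reproduced exactly. Your injective one-hot sum code plus a lookup FFN also makes the Deep Sets decomposition unnecessary. The paper instead keeps $\mW_Q=\mathbf{0}$ and stores the unnormalized $\phi$-sum and $\vp_n$ in spare blocks. Its second FFN distinguishes the regimes $\vz=\mathbf{0}$ and $\vz=\vp_i/(k+1)$ and undoes the mis-normalization by the factor $j-i+1$. Your route gives an exact match rather than an approximate one, but it relies on nonzero queries, which is precisely the escape route the paper closes in part~2.

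Part~2, however, does not establish non-compressibility. The main problem is that the cancellation device cannot create an obstruction. Suppose the partner head has the same queries and keys and negated values. Then any policy that is odd in the values (e.g.\ truncation to the first $\length(n)$ slots) sends the partner's cache $(\mK,-\mV)$ to $(\tilde\mK,-\tilde\mV)$ whenever it sends $(\mK,\mV)$ to $(\tilde\mK,\tilde\mV)$. The two compressed heads then cancel exactly for every suffix, whatever $\length(n)$ is. So your closing claim, that no cache with fewer than $n$ slots reproduces the cancellation, is false.

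The affine-dimension count is also misstated, in two ways. First, the softmax at a suffix query ranges over the suffix's own key--value pairs as well, and the policy cannot remove these. The reachable outputs are therefore convex combinations of $\length(n)+k$ points, not $\length(n)$. Even if you zero the suffix values in the relevant coordinates, you get affine dimension $\length(n)$, which is not $<n-1$ when $\length(n)=n-1$. Second, you need $n$ distinct suffix-determined queries, which is impossible, e.g., for $n=N-1$ and $|\alb|<n$.

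More fundamentally, showing that the compressed output misses the \emph{correct} retrieved value does not propagate to the output. An FFN that approximates $\func$ on all valid inputs cannot ``magnify deviations from the true values'', because it does not know them. It can only misbehave on inputs that are far from \emph{every} valid representation. If the compressed representation lands near the valid representation of another sequence, the output is close to that sequence's $\func$-value, and the range of $\func$ may have diameter below $\const$. Your stated use of the hypothesis (``the retrieved quantity genuinely matters'') does not actually enter any of these arguments.

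The paper's mechanism avoids all of this. With $\mW_Q=\mathbf{0}$ in the aggregating layer, the compressed keys have no effect, so each suffix token receives weight exactly $1/(\length(n)+k)$ instead of $1/(n+k)$. After the FFN rescales by the position, the representation becomes $\alpha(\tilde\vv+S_\vb)$ with $\alpha=\frac{n+k}{\length(n)+k}>1$, where $S_\vb$ is the suffix $\phi$-sum. This is a dilated translate of the set of suffix sums, whatever $\tilde\vv$ the policy chooses. The hypothesis $\func([\va,\vb^1])\neq\func([\va,\vb^2])$ is what makes this set non-degenerate ($D>0$). Lemma~\ref{lemma:convex_hull} then yields a suffix whose representation is at distance at least $\frac{\alpha-1}{2}D$ from $H$. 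Finally, the FFN is built to output a point $\vu_0$ with $\mathrm{dist}(\vu_0,\rho(H))>\const$ outside $H_\delta$.
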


We note that, as KV cache compression methods perform well in practice, the worst-case behavior implied by this theorem may not typically arise; understanding in what scenarios this occurs is an interesting direction for future work. Nevertheless, the result highlights that compressibility can vary across transformer implementations, suggesting the potential benefit of guiding transformers toward more compressible representations at train time. Before turning to training methods, we build intuition through a simple motivating example. We show that even for natural sequence-to-vector functions, simple transformer implementations can yield non-compressible representations, while more structured, highly compressible implementations exist.



\textbf{Motivating example: histogram computation.} Let $A = [m]$ be a finite alphabet. Given a sequence $\va = (a_1,\dots,a_n) \in A^n$, the histogram function computes the empirical distribution of symbols,
\begin{equation}
    \hist(\va) = \left(\tfrac{n_1}{n}, \dots, \tfrac{n_m}{n}\right),
\end{equation}
where $n_i = |\{j : a_j = i\}|$. This is a natural sequence-to-vector function that depends only on aggregate statistics of the input. A straightforward way to implement this function with a 2-layer transformer is as follows. First, take the embedding map to be\footnote{We typically assume token embeddings of the form $\te(a_i, i) = \vu_{a_i}+\vp_i$. The embedding map used above can be obtained from this representation by a minor adjustment to the first attention layer; see Appendix~\ref{app:motivating_example} for details.} $\te(a_i, i) = \ve_{a_i}$. We take the first transformer layer to be the identity \footnote{This can be done by setting $\mW_O = \mathbf{0}$ and choosing the feedforward network to be the identity.},
and then apply a second attention layer with uniform attention weights (by setting $\mW_Q = \mathbf{0}$),  So that the representation of the final token becomes the average of all previous token embeddings, obtaining
\begin{equation}
    \model(\va) = \frac{1}{n} \sum_{i=1}^n \ve_{a_i} = \hist(\va).
\end{equation}
This gives a natural implementation of the histogram function with a very simple architecture (see Appendix~\ref{app:motivating_example} for details). Interestingly, we find that this implementation is not compressible:
\begin{proposition}
\label{prop:hist_non_compressible}
\looseness=-1
For any $N \in \sN$ and any budget function $\length$ with $\length(N-1) < N-1$ there exists a constant $\const>0$ such that the above transformer implementation of $\hist$ is not $(N, \const, \length)$-compressible.
\end{proposition}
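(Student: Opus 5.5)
The plan is to compute the compressed forward pass explicitly for a prefix of length $n=N-1$ and a suffix of length $k=1$. I will then vary the single suffix token and show that the compressed model reacts to this change with the wrong weight, whatever the compression policy does.

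\textbf{Setup.} Take the implementation described above, with $\te(a_i,i)=\ve_{a_i}$, an identity first layer ($\mW_O=\mathbf{0}$), and a second attention layer with $\mW_Q=\mathbf{0}$. Fix any policy $\compress=(\cl_1,\cl_2)$ with budget $\length$.

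\textbf{Step 1: the first layer ignores the compressed cache.} Because its attention output is multiplied by $\mW_O=\mathbf{0}$, whatever $\cl_1$ returns has no effect. The suffix token $b$ therefore enters layer 2 with representation $\ve_b$, exactly as in the uncompressed model.

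\textbf{Step 2: uniform attention over all slots.} In layer 2 the query of $b$ is $\mathbf{0}$, so every logit is zero. This holds regardless of the compressed keys $\tilde\mK_\va$. Hence the attention is uniform over the $\length(n)+1$ slots, and the head output is
\begin{equation}
\frac{1}{\length(n)+1}\Big(\vs_\va+\ve_b\mW_V\Big),
\qquad
\vs_\va:=\textstyle\sum_{j\le \length(n)}\tilde\vv_j .
\end{equation}
The vector $\vs_\va$ depends only on the prefix $\va$, since $\cl_2$ sees only the prefix cache $(\mK_\va,\mV_\va)$. The uncompressed output is instead $\frac1N(\sum_i \ve_{a_i}\mW_V+\ve_b\mW_V)$. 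Any residual connection or FFN contribution at the final token coincides in the two models, because it depends only on $\ve_b$ (Step 1). Such terms will cancel in the difference below, or at worst enter as the same function of the head output in both models; I will check this against the exact architecture in Appendix~\ref{app:motivating_example}.

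\textbf{Step 3: two suffixes and the triangle inequality.} Fix a prefix $\va$ and two suffix symbols $i\neq j$ (this needs $m\ge2$; for $m=1$ the function is constant and the claim should be read accordingly). Write $r=\length(N-1)\le N-2$. Taking the difference of the outputs for $b=i$ and $b=j$, the prefix-dependent terms cancel. In the true model the difference is $\frac1N(\ve_i-\ve_j)\mW_V$, and in the compressed model it is $\frac1{r+1}(\ve_i-\ve_j)\mW_V$. If both compressed outputs were within $\const$ of the true ones, the triangle inequality would give
\begin{equation}
\Big(\tfrac{1}{r+1}-\tfrac1N\Big)\|(\ve_i-\ve_j)\mW_V\|<2\const .
\end{equation}
With $\mW_V=\mI$ we have $\|(\ve_i-\ve_j)\mW_V\|=\sqrt2$, and $\frac{1}{r+1}-\frac1N\ge\frac1{N-1}-\frac1N$. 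So choosing
\begin{equation}
\const=\tfrac{\sqrt2}{2}\Big(\tfrac1{N-1}-\tfrac1N\Big)
\end{equation}
gives a contradiction, and the implementation is not $(N,\const,\length)$-compressible.

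\textbf{Main obstacle.} The delicate point is the footnoted variant with embeddings $\vu_{a_i}+\vp_i$, where the first attention layer is modified to strip the positional part. In that case Step 1 may fail, because compressing the layer-1 prefix cache could perturb the suffix representation. I would first check whether the construction in Appendix~\ref{app:motivating_example} can be made to remove $\vp_i$ using only the token's own contribution, for example via the FFN or a self-attending head. Failing that, I would bound the perturbation uniformly in $b$, so that it enters both suffixes identically and cancels in the difference of Step 3.
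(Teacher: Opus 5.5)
Your proof is correct and follows the paper's argument. You fix a prefix of length $N-1$ and a one-token suffix, use $\mW_Q=\mathbf{0}$ to force uniform weights so that the compressed model weights the suffix token by $1/(\length(N-1)+1)$ instead of $1/N$, and compare two distinct suffix symbols. The paper does the same using the first coordinate with suffixes $1$ and $2$, obtaining $\const=\frac{N-1-\length(N-1)}{2N(\length(N-1)+1)}$; your full-norm version gives a budget-independent constant. Both arguments tacitly require $m\ge 2$, as you rightly note.

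Your ``main obstacle'' is resolved in Appendix~\ref{app:motivating_example} exactly as you first suggest. There the first block has all attention projections set to zero and its FFN strips $\vp_i$, and the block-2 readout $\ffn(\vx,\vy)=\vy$ is linear, so your Steps 1--3 go through unchanged.
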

\begin{proof}[Proof sketch.]
Recall that in the compressed setting the input is $[\va, \vb]$ and we want to compress the KV caches associated with the prefix $\va$. We begin by observing that since in the first layer $\mW_O =\mathbf{0}$, compressing the prefix of the first layer has no effect on the representation of the tokens of $\vb$. Additionally, since in the second layer $\mW_Q = \mathbf{0}$, the attention scores are independent of the keys, and therefore the attention weights are uniform across all tokens. Thus, for a prefix sequence $\va$ of length $n$ and a suffix $\vb$ of length $k$, the compressed model satisfies, for any compression policy $\compress$,
\begin{equation}
    \model_{\compress, \va}(\vb) = \frac{1}{\length(n)+k} \Big( \sum_{i=1}^{\length(n)} \tilde \vv_i + \sum_{i=1}^k \ve_{b_i} \Big).
\end{equation}
Similarly, the full model computes
\begin{equation}
    \model([\va,\vb]) = \frac{1}{n+k} \Big( \sum_{i=1}^{n} \ve_{a_i} + \sum_{i=1}^k \ve_{b_i} \Big).
\end{equation}
For notational convenience, define
\begin{equation}
\tilde \vv = \frac{1}{\length(n)+k} \sum_{i=1}^{\length(n)} \tilde \vv_i, \quad
\tilde \va = \frac{1}{n+k} \sum_{i=1}^{n} \ve_{a_i}, \quad
\tilde \vb =  \Big(\frac{1}{\length(n)+k} - \frac{1}{n+k}\Big) \sum_{i=1}^{k} \ve_{b_i}.
\end{equation}

Then the difference between the compressed and full outputs can be written as
\begin{equation}
\label{eq:compress_cond_sketch}
    \| \model_{\compress, \va}(\vb) - \model([\va,\vb]) \|
    =
    \left\|
    \tilde \vv - \tilde \va
    +\tilde \vb
    \right\|.
\end{equation}
Both $\tilde \vv$ and $\tilde \va$ depend only on the prefix $\va$, while $\tilde \vb$ depends on the suffix $\vb$ and can vary over a range of values. For $\model_{\compress, \va}(\vb)$ to $\varepsilon$-approximate $\model([\va,\vb])$ uniformly over all suffixes $\vb$, Equation~\ref{eq:compress_cond_sketch} requires that the fixed term $\tilde \vv - \tilde \va$ be $\varepsilon$-close to every possible value of $\tilde \vb$. For sufficiently small $\varepsilon$, this is impossible. Hence, the error is bounded below by a constant.

\end{proof}

For a formal proof of Proposition \ref{prop:hist_non_compressible}, see Appendix \ref{app:motivating_example}. We now show that a slight modification of the above implementation yields highly compressible KV representations.

\begin{proposition}
\label{prop:hist_compressible}
For every $N \in \sN$ and $\varepsilon > 0$, there exists a $\hist$ transformer implementation  with the same architecture as above that is $(N,\varepsilon,\length)$-compressible with $\length(n) = 1$.
\end{proposition}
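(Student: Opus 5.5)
The plan is to keep the non-compressible construction of Proposition~\ref{prop:hist_non_compressible} almost unchanged. I would alter only the second attention layer, so that a single prefix slot can carry the weight of $n$ ordinary tokens. The mechanism is the softmax itself: a key whose score against every query is $\log n$ receives exactly the attention mass of $n$ tokens with score $0$. I expect the compressed output to then match the full model exactly, giving error $0<\varepsilon$ for every $N$.

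\textbf{Construction.} As before, take $\te(a_i,i)=\ve_{a_i}$, with the same adjustment as in Appendix~\ref{app:motivating_example} if positional terms $\vu_{a_i}+\vp_i$ are present. Take the first layer to be the identity, with $\mW_O=\mathbf{0}$ and an identity FFN. In the second layer, set $\mW_V$ so that values are $\ve_{a_i}$ (as before) and set $\mW_K=\mathbf{0}$, so every genuine key is $\mathbf{0}$. Instead of $\mW_Q=\mathbf{0}$, choose $\mW_Q$ with all rows equal to a fixed nonzero vector $\vu\in\sR^{d_k}$. Then every token's query is the same vector $\vu$, since each $\ve_j$ is mapped to $\vu$. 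All genuine scores are $\vu^\top\mathbf{0}=0$, so the attention on any uncompressed input is uniform. The full model therefore still outputs $\frac1{n+k}\big(\sum_i\ve_{a_i}+\sum_j\ve_{b_j}\big)=\hist([\va,\vb])$. The architecture (dimensions, number of layers and heads) is identical; only parameter values differ.

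\textbf{Compression policy.} For layer $1$, let $\cl_1$ output any single KV pair, for instance the first one. Because $\mW_O=\mathbf{0}$, the suffix representations entering layer $2$ are unaffected. For layer $2$, $\cl_2$ reads the value cache $\mV_\va=[\ve_{a_1},\dots,\ve_{a_n}]^\top$ and the length $n$. It outputs $\tilde\vv_1=\frac1n\sum_i\ve_{a_i}=\hist(\va)$ and any key $\tilde\vk_1$ with $\vu^\top\tilde\vk_1=\sqrt{d_k}\log n$, for example $\tilde\vk_1=\sqrt{d_k}\log n\,\vu/\|\vu\|^2$. The final suffix token's query is $\vu$, so the compressed slot receives unnormalized weight $e^{\log n}=n$ and each suffix token receives weight $1$. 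Hence
\begin{equation}
\model_{\compress,\va}(\vb)=\frac{n\,\hist(\va)+\sum_{j=1}^k\ve_{b_j}}{n+k}=\frac{\sum_{i=1}^n\ve_{a_i}+\sum_{j=1}^k\ve_{b_j}}{n+k}=\model([\va,\vb]),
\end{equation}
which gives $(N,\varepsilon,1)$-compressibility for every $\varepsilon>0$.

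\textbf{Main obstacle.} The point I expect to require care is whether Definition~\ref{def:compressibility} allows the compressed key to lie outside the span of genuine keys. Here the genuine keys are all $\mathbf{0}$, but $\tilde\vk_1$ is not. The formal definition in Appendix~\ref{app:preliminaries} appears to let $\cl_\ell$ be an arbitrary map into sequences of key--value vectors, which is what I would rely on.

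If keys were instead required to lie in the image of $\mW_K$, I would reserve an extra coordinate of the residual stream. Tokens would carry $0$ in that coordinate, $\mW_K$ would pass it through, and $\mW_Q$ would map to a nonzero query along it. The argument would then go through unchanged, at the cost of at most one extra dimension.

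A second, minor check is that the positional-embedding footnote's adjustment of the first layer keeps the keys and queries of layer $2$ in the above form. I would verify this by repeating the construction of Appendix~\ref{app:motivating_example} with the modified $\mW_Q$.
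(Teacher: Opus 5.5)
Your proof is correct, and it repairs the compression-induced mis-scaling in a different place than the paper does.

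\textbf{The paper's route.} It keeps $\mW_Q=\mW_K=\mathbf{0}$ and uses a zero compressed key, so the single slot is weighted like one token. The attention output is then the mis-scaled average $\frac{1}{k+1}\bigl(\sum_i \ve_{a_i}+\sum_j \ve_{b_j}\bigr)$. The paper fixes this downstream: it widens the stream to four blocks, carries $\vp_i$ through layer~1, and stores the unnormalized histogram together with $\vp_n$ in the compressed value. A final FFN, built by universal approximation, reads $n$ from that auxiliary block and $n+k$ from $\vp_{n+k}$, then multiplies by $\frac{k+1}{n+k}$. The result is only $\varepsilon$-accurate.

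\textbf{Your route.} You repair the scaling inside the softmax. With zero genuine keys and a constant nonzero query, a single key of score $\log n$ carries the mass of $n$ tokens. Storing the mean value then reproduces the dense attention output exactly.

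\textbf{What each buys.} Your construction is exact, needs no positional machinery, and keeps literally the same dimensions as the non-compressible model; the paper's compressible model uses width $4m$ rather than $2m$. It also sharpens the paper's message: your two implementations differ only in $\mW_Q$, which is invisible to the uncompressed function when $\mW_K=\mathbf{0}$. It is also the same idea as the log-bias $\beta$ that Attention Matching fits. The paper's route is the template it reuses in Lemma~\ref{lemma:compressible_main}, where the FFN must rescale an average into a sum before applying $\rho$ anyway. Your trick would simplify that proof as well.

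\textbf{Your caveats.} Neither is a problem. The paper's own compressed value puts $\vp_n$ in a block that $\mW_V$ zeroes out, so it relies on the same freedom of $\cl_\ell$ to emit arbitrary vectors. If layer~1 only approximates $\rho_1$, the final query is within $O(\varepsilon')$ of your fixed query vector. This perturbs the compressed slot's log-weight by $O(\varepsilon'\log N)$, which still gives $(N,\varepsilon,1)$-compressibility. Alternatively, a ReLU FFN can interpolate $\rho_1$ exactly on the finite set $\{\vu_j+\vp_i\}$.
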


\looseness=-1
\begin{proof}[Proof sketch]
We assume an injective token embedding of the form $\te(a_i, i) = \vu_{a_i}+\vp_i$ where $\vp_i$ are positional embeddings. As in the previous construction, we set $\mW_O = \mathbf{0}$ in the first transformer layer, but now choose the feedforward network to approximately satisfy
\begin{equation}
\rho_1(\vu_{a_i} + \vp_i) =[\ve_{a_i}^\top, \vp_i^\top, \mathbf{0}^\top, \mathbf{0}^\top]^\top
\end{equation}
That is, each token representation is partitioned into four blocks: the first encodes token identity, the second retains positional information, and the last two are auxiliary slots used later in the construction. We then apply a second attention layer with $\mW_Q = \mathbf{0}$, so attention is uniform. With the residual connection, the final token representation can be made to contain the empirical average in the third block while preserving the positional encoding $\vp_n$, which encodes the sequence length.
The final feedforward network $\rho_2$ is designed to behave differently depending on the available information. In the uncompressed setting, the value of the final block remains $\mathbf{0}$, in which case it simply returns the empirical average (from the third slot). In the compressed setting, however, an optimal compression policy can provide additional positional information in the final vector block, which can then be used to correct for the distortion introduced by compression.
Specifically, we define a compression policy that maps the prefix to a single KV pair, whose value stores the \emph{unnormalized} histogram together with the prefix length (encoded via $\vp_n$ in the last block). When processing a suffix, attention averages this summary with the suffix tokens, producing a mis-scaled estimate of the histogram. Crucially, the model now has access to both the original prefix length and the total sequence length (via positional encodings). The final feedforward network uses this information to re-normalize the average and recover the correct histogram. This establishes $(N,\varepsilon,\length)$-compressibility with $\length(n) = 1$.

\end{proof}

For a formal proof of Proposition \ref{prop:hist_compressible}, see Appendix \ref{app:motivating_example}. Note that while incorporating positional information to correct attention weights under compression is conceptually straightforward, a standard transformer trained only on full (uncompressed) sequences is not explicitly encouraged to learn such a mechanism. Since computing the histogram depends only on token identities and is invariant to their positions, the model can, in principle, learn to ignore positional information. This suggests that obtaining compressible solutions may benefit from a training procedure that encourages compression.


%% file: sections/method.tex
\section{Training for KV Cache Compressibility}
\begin{table*}[t]
  \centering
  \small
  \caption{\textbf{\ourmethod matches base model performance.} We test our \ourmethod-trained \textsc{Qwen2.5} checkpoints on a variety of QA benchmarks without any KV cache compression applied. Values are normalized multiple-choice accuracy (\%) with 1000 examples per task.}\label{tab:fineweb-reasoning}
  \vspace{10pt}
  \resizebox{\linewidth}{!}{
      \begin{tabular}{llcccccccc}
      \toprule
      Model & Variant & HellaSwag & WinoGrande & PIQA & OpenBookQA & ARC-E &
      ARC-C & Avg. \\
      \midrule
      \multirow{2}{*}{\textsc{Qwen2.5-0.5B}}
        & Base & \textbf{54.6} & 52.9 & 70.4 & \textbf{38.2} & 60.8
      & 32.2 & 51.5  \\
        & \ourmethod & 53.6 & \textbf{54.1} & \textbf{72.1} & 37.4 &
      \textbf{63.6} & \textbf{32.4} & \textbf{52.2}  \\
      \midrule
      \multirow{2}{*}{\textsc{Qwen2.5-1.5B}}
        & Base & \textbf{68.6} & \textbf{60.5} & \textbf{76.2} &
      40.0 & 73.3 & \textbf{45.2} & \textbf{60.6}  \\
        & \ourmethod & 66.4 & \textbf{60.5} & 75.5 & \textbf{40.6} &
      \textbf{73.9} & 43.5 & 60.1 \\
      \bottomrule
      \end{tabular}
  }
  \vspace{-3mm}
\end{table*}
As shown in the previous section, training transformers to solve sequence-to-vector tasks may admit both compressible and non-compressible solutions.
This naturally raises the following question:
\vspace{-2mm}
\begin{insight}
 cCan we guide transformers toward more compressible representations during training? 
\end{insight}
\vspace{-1mm}
To answer this question, we introduce KV-Compression Aware Training (\ourmethod), a continued pretraining procedure that explicitly encourages compressible internal representations.

\looseness=-1
\textbf{Train-time KV sparsification policy.}
Our goal is to train transformers whose representations are more amenable to SOTA optimization-based KV cache compression methods. Ideally, one would incorporate such methods directly into the training loop, adapting the compression policy as the model evolves; however, this is computationally prohibitive. A practical alternative is to introduce a simple and scalable training-time KV sparsification policy, such as random KV slot dropping, attention-based filtering, or a lightweight parameterized policy that adapts jointly with the model. In our ablations (Appendix~\ref{app:method-ablation}), we find that a parameterized policy provides the strongest performance, in terms of both downstream compression and retention of uncompressed accuracy. We hypothesize that this is because adaptive policies better align with the test-time behavior of optimization-based compressors, whereas fixed policies may be too rigid. Accordingly, in our experiments, we implement \ourmethod using lightweight learned routers, described below, while emphasizing that the framework is general and can accommodate a range of sparsification policies.

\textbf{Routing mechanism.}
Starting from a pretrained decoder-only transformer, we insert learned \emph{routers} between consecutive layers and train them jointly with the model. At layer $\ell$, the router takes as input the token representations from layer $\ell-1$ and outputs a scalar score in $[0,1]$ for each token, indicating its importance. These scores are thresholded using a (non-trainable) hyperparameter $\tau$ to produce a binary mask: tokens with scores above $\tau$ are active in attention, while the rest are masked out. For efficiency, routers are implemented as linear attention modules. Routers are initialized with all tokens active (i.e., all scores are set to $1$), so training begins from the standard dense transformer and gradually learns to mask tokens, using a budget loss (defined below). To further reduce overhead, routers are shared across groups of consecutive layers. See Appendix~\ref{app:method} for additional details.

\textbf{Training objective.}
We train the transformer using an augmented next-token prediction (NTP) objective.
For a sequence $\va=(a_1,\dots,a_n)$, let $p_\theta^{\mathrm{mask}}(\cdot \mid \va_{<i})$ denote the model distribution when masking is applied to the KV cache, and $p_\theta^{\mathrm{dense}}(\cdot \mid \va_{<i})$ denote the distribution under the standard (unmasked) forward pass. During training, we optimize
\begin{equation} \label{eq:training_objective}
    \mathcal{L}(\theta) = \lambda_\mathrm{mask}\mathcal{L}_\mathrm{mask} + \lambda_{\mathrm{budget}}\mathcal{L}_{\mathrm{budget}} + \lambda_\mathrm{anchor}\mathcal{L}_\mathrm{anchor}.
\end{equation}
The first term is given by
\begin{equation}
    \mathcal{L}_{\mathrm{mask}} =
    \frac{1}{n}\sum_{i=1}^{n} \KL\!\left(\operatorname{sg}[p_\theta^{\mathrm{dense}}(\cdot \mid a_{<i})] \,\middle\|\, p_\theta^{\mathrm{mask}}(\cdot \mid a_{<i})\right),
\end{equation}
where $\operatorname{sg}$ denotes the stop-gradient operation. This term trains the masked forward pass to match the dense model distribution via self-distillation. When the KV-sparsification policy is learnable, we include a budget loss designed to maintain a desirable retention rate.  Following~\citet{hwang2025hnet}, in our implementation, we use:
\begin{equation}
\mathcal{L}_\mathrm{budget}
= \rho^{-1} FG + (1-\rho)^{-1}(1 - F)(1 - G),
\quad F = \frac{1}{L} \sum_{i=1}^{L} m_i,
\quad G = \frac{1}{L} \sum_{i=1}^{L} q_i,
\end{equation}
where $q_i$ and $m_i$ denote the router’s score and binary mask for token $i$, respectively, and $\rho$ is the target retention rate. The loss is averaged across router layers, and gradients are taken with respect to the scores $q_i$ only. The final term
\begin{equation}
    \mathcal{L}_{\mathrm{anchor}} =\frac{1}{n}\sum_{i=1}^{n} -\log p_\theta^{\mathrm{dense}}(a_i \mid a_{<i}),
\end{equation}
applies standard NTP to the output of the dense forward pass, preserving its distribution and providing a stable teacher for distillation.\footnote{We note that a natural alternative to our training objective is to use distillation from a frozen base model for both the masked and dense forward passes. We do not use this variant since it requires keeping an additional frozen teacher model in memory and running a third forward pass, significantly increasing training cost.} See Figure~\ref{fig:router-training-method} for an illustration of this training procedure.

\textbf{Inference.}
At evaluation time, we use the unmasked forward pass, on top of which we can apply standard KV cache compression methods. Thus, the output of \ourmethod is a standard transformer whose representations are trained to be more amenable to compression.

%% file: sections/experiments.tex
\section{Empirical Evaluation}\label{sec:experiments}
\begin{figure}[t]
\vspace{-3mm}
    \centering
    \includegraphics[width=\linewidth]{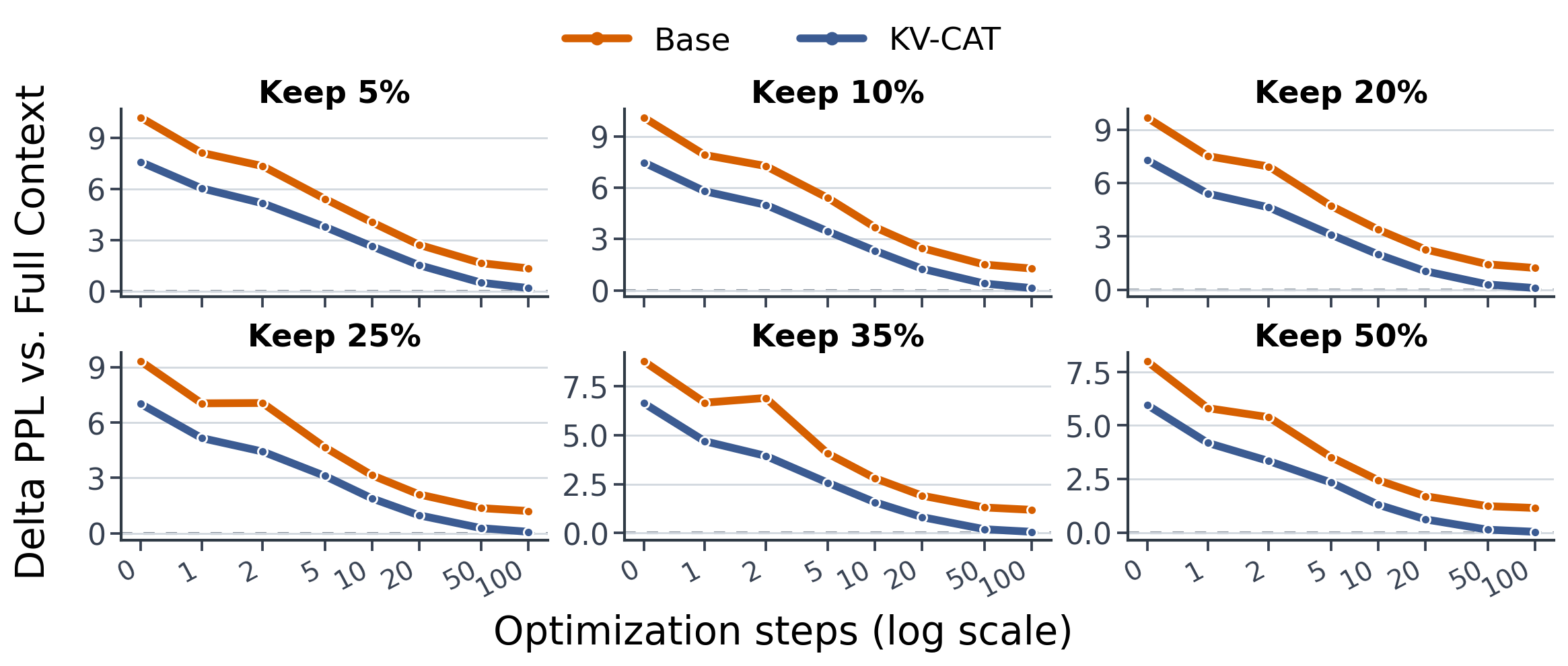}
    \vspace{-10pt}
    \caption{\textbf{\ourmethod speeds up gradient-based KV cache compression.} We plot the gap in suffix perplexity under full/compressed-prefix inference throughout gradient-based KV cache optimization. Each panel fixes a different KV keep ratio. Across ratios, the \ourmethod checkpoint achieves a comparable $\Delta$PPL in fewer optimization steps than the base model, yielding up to a \textbf{5}$\times$ speedup.}\label{fig:qwen0p5_ppl_plots_main}
    \vspace{-4mm}
\end{figure}
\begin{wraptable}[16]{r}{0.44\textwidth}
  \vspace{-20pt}
  \centering
  \caption{\textbf{Needle retrieval from compressed haystack.} Exact-match accuracy (\%) over 100 examples per keep ratio.}\label{tab:niah}
  \vspace{4pt}
  \resizebox{\linewidth}{!}{
      \begin{tabular}{@{}ccccc@{}}
      \toprule
      & \multicolumn{2}{c}{\textsc{Qwen2.5-0.5B}} &
        \multicolumn{2}{c}{\textsc{Qwen2.5-1.5B}} \\
      \cmidrule(lr){2-3}\cmidrule(l){4-5}
      Keep & Base & \ourmethod & Base & \ourmethod \\
      \midrule
      5\% & \textbf{18} & 17 & \textbf{20} & \textbf{20} \\
      10\% & 12 & \textbf{15} & \textbf{20} & \textbf{20} \\
      15\% & 15 & \textbf{16} & \textbf{21} & \textbf{21} \\
      20\% & \textbf{15} & 13 & 22 & \textbf{23} \\
      25\%& 20 & \textbf{22} & 24 & \textbf{28} \\
      30\% & 23 & \textbf{34} & 41 & \textbf{44} \\
      35\% & 21 & \textbf{32} & 46 & \textbf{54} \\
      40\% & 24 & \textbf{38} & 42 & \textbf{55} \\
      50\% & 28 & \textbf{47} & 49 & \textbf{67} \\
      \midrule
      Mean & 19.6 & \textbf{26.0} & 31.7 & \textbf{36.9} \\
      \bottomrule
      \end{tabular}
  }
  \vspace{-0.0em}
\end{wraptable}
\looseness=-1
We evaluate the effect of \ourmethod on the performance of downstream optimization-based KV cache compression techniques. Our empirical study is guided by four central questions: \textbf{(Q1)} Does \ourmethod training preserve base model performance in uncompressed settings? \textbf{(Q2)} Does \ourmethod improve compression quality under a fixed KV-optimization budget? \textbf{(Q3)} Does \ourmethod improve retrieval from a KV-compressed context? \textbf{(Q4)} Do these gains transfer to KV-compressed long-context question answering? Across all experiments, we compare the original pretrained model to the model obtained via \ourmethod continued pretraining, applying the same downstream compression method at matched KV retention and optimization budgets. We provide full experimental details in Appendix~\ref{app:exp_details}, and additional results in Appendix~\ref{app:add_exp}.

\textbf{\ourmethod training setup.}
We apply \ourmethod to \textsc{Qwen2.5-0.5B} and \textsc{Qwen2.5-1.5B} checkpoints via continued pretraining on FineWeb-Edu~\citep{penedo2024fineweb}. In both cases, we train all model parameters, and introduce four learned routers shared across four layer groups. 
Each model is trained on a total of $5.24\times10^9$ tokens with a max learning rate of $10^{-4}$. See further details in Appendix~\ref{app:cpt}.

\begin{wraptable}[22]{r}{0.52\linewidth}
      \centering
      \vspace{-20pt}
      \caption{
      \looseness=-1
      \textbf{Suffix completion under prefix compression.} Each example uses a 768-token prefix and a 256-token suffix. Prefix KVs are compressed using Attention Matching. We report: (1) $\Delta$PPL, the difference in suffix perplexity under the full prefix vs. compressed prefix; (2) KL, the divergence between the two token distributions; and (3) Top-1, percentage of tokens on
  which the two distributions' top-1 agree.}\label{tab:attention-matching-ppl}
      \vspace{4pt}
      \resizebox{\linewidth}{!}{
          \begin{tabular}{@{}ccc|ccc@{}}
          \hline
          Model & Keep & Variant & $\Delta$PPL ($\downarrow$) & KL ($\downarrow$) & Top-1 ($\uparrow$) \\
          \hline
          \multirow{8}{*}{\rotatebox[origin=c]{90}{\textsc{Qwen2.5-0.5B}}}
          & \multirow{2}{*}{5\%} & Base & 0.780 & 0.0562 & 88.5\% \\
          &                       & \ourmethod & \textbf{0.461} & \textbf{0.0385} & \textbf{91.1\%} \\
          & \multirow{2}{*}{10\%} & Base & 0.662 & 0.0483 & 89.3\% \\
          &                        & \ourmethod & \textbf{0.422} & \textbf{0.0321} & \textbf{91.4\%} \\
          & \multirow{2}{*}{20\%} & Base & 0.777 & 0.0549 & 88.7\% \\
          &                        & \ourmethod & \textbf{0.438} & \textbf{0.0374} & \textbf{90.8\%} \\
          & \multirow{2}{*}{40\%} & Base & 0.592 & 0.0431 & 90.3\% \\
          &                        & \ourmethod & \textbf{0.360} & \textbf{0.0272} & \textbf{92.5\%} \\
          \hline
          \multirow{8}{*}{\rotatebox[origin=c]{90}{\textsc{Qwen2.5-1.5B}}}
          & \multirow{2}{*}{5\%} & Base & 0.573 & 0.0570 & 89.2\% \\
          &                       & \ourmethod & \textbf{0.292} & \textbf{0.0361} & \textbf{90.9\%} \\
          & \multirow{2}{*}{10\%} & Base & 1.027 & 0.0796 & 88.3\% \\
          &                        & \ourmethod & \textbf{0.320} & \textbf{0.0366} & \textbf{90.7\%} \\
          & \multirow{2}{*}{20\%} & Base & 1.093 & 0.0876 & 87.7\% \\
          &                        & \ourmethod & \textbf{0.536} & \textbf{0.0491} & \textbf{90.2\%} \\
          & \multirow{2}{*}{40\%} & Base & 0.742 & 0.0653 & 90.0\% \\
          &                        & \ourmethod & \textbf{0.259} & \textbf{0.0326} & \textbf{92.4\%} \\
          \hline
          \end{tabular}
      }
  \end{wraptable}
\textbf{KV cache compression methods.}
\looseness=-1
Across experiments, we use two post-hoc KV cache compression procedures: Attention Matching~\citep{zweiger2026fastkv} and a gradient-based KV cache optimization method adapted from~\citet{eyuboglu2025cartridges}. Attention Matching constructs a compact prefix cache layer-by-layer, approximating the dense model's attention traces. The gradient-based method instead directly optimizes the compact prefix KV cache to match the dense model's logits on suffix tokens. Both methods compress the KV cache of a given prefix \texttt{<text>} by leveraging supervision from a suffix, which together form a \emph{reconstruction sequence}. For the retrieval and QA evaluations,
we follow \citet{zweiger2026fastkv}, and use a reconstruction sequence of the form \texttt{<text><instruction><text>}, where \texttt{<instruction>} is set to ``\texttt{Reproduce the preceding passage verbatim.}'', and the loss is applied only on the second repeated \texttt{<text>} span. Additional implementation details are given in Appendix~\ref{app:exp_details}.

\textbf{Preserving uncompressed performance (Q1).}
We evaluate the effect of \ourmethod on the performance of the model \emph{with no KV cache compression applied}. We compare the base model and the \ourmethod-trained checkpoint on six standard multiple-choice benchmarks using the LightEval harness~\citep{habib2023lighteval}. As shown in Table~\ref{tab:fineweb-reasoning},
\ourmethod closely matches the base model, with an average gain of 0.7 accuracy points for \textsc{Qwen2.5-0.5B} and a drop of 0.5 points for \textsc{Qwen2.5-1.5B}. These results indicate that \ourmethod preserves standard dense behavior rather than trading it off for compressibility.



\looseness=-1
\textbf{Compression under fixed KV-optimization budget (Q2).} We evaluate compression quality at different keep ratios, under a fixed KV-optimization budget. To do this, we sample held–out prefix--suffix pairs from FineWeb and directly optimize a compacted prefix KV cache to produce the corresponding suffix. We optimize the KV slots via Attention Matching using 256 query states and 8 NNLS iterations across all keep ratios. We report three different similarity metrics between the suffix next token distribution under the full/compressed prefix, averaged over 128 pairs per keep ratio. As seen in Table~\ref{tab:attention-matching-ppl}, under a fixed optimization budget, the \ourmethod-trained model consistently outperforms the base model across compression ratios, model sizes, and similarity measures, and achieves up to \textbf{3.21}$\times$ better retention of suffix perplexity. We additionally evaluate gradient-based KV cache optimization on the same prefix--suffix pairs. As seen in Figure~\ref{fig:qwen0p5_ppl_plots_main}, the \ourmethod-trained model achieves better suffix perplexity retention throughout optimization, requiring up to \textbf{5}$\times$ fewer optimization steps to match the base model's performance. For additional details, see Appendix~\ref{app:suffix} and for cross-domain evaluation see Appendix~\ref{app:cross-domain-am}.

\begin{wraptable}[14]{l}{0.70\linewidth}
  \vspace{-16pt}
  \centering
  \small
  \setlength{\tabcolsep}{4pt}
  \caption{\textbf{Long-form QA after context compression.} We evaluate the models on seven tasks from LongBench v2. The KV caches of the questions' contexts are compressed with a gradient-based method.}
  \label{tab:lbv2-step20-selected-monotone}
  \vspace{4pt}
  \resizebox{\linewidth}{!}{
      \begin{tabular}{lcccccc}
      \toprule
      & \multicolumn{2}{c}{10\% keep} & \multicolumn{2}{c}{20\% keep} & \multicolumn{2}{c}{50\% keep} \\
      \cmidrule(lr){2-3} \cmidrule(lr){4-5} \cmidrule(lr){6-7}
      Subdomain & Base & \ourmethod & Base & \ourmethod & Base & \ourmethod \\
      \midrule
      Academic & 18.1 & \textbf{26.6} & 19.1 & \textbf{27.7} & 21.3 & \textbf{26.6} \\
      Agent history QA & 20.0 & \textbf{30.0} & 25.0 & \textbf{30.0} & \textbf{30.0} & \textbf{30.0} \\
      Knowledge graph reasoning & \textbf{33.3} & \textbf{33.3} & \textbf{26.7} & \textbf{26.7} & 20.0 & \textbf{26.7} \\
      Legal & 27.3 & \textbf{48.5} & 27.3 & \textbf{45.5} & 27.3 & \textbf{42.4} \\
      Many-shot learning & \textbf{28.6} & 19.0 & \textbf{28.6} & \textbf{28.6} & \textbf{38.1} & 33.3 \\
      New language translation & \textbf{30.0} & \textbf{30.0} & 15.0 & \textbf{35.0} & 15.0 & \textbf{35.0} \\
      Table QA & 11.1 & \textbf{27.8} & \textbf{22.2} & \textbf{22.2} & \textbf{38.9} & 33.3 \\
      \midrule
      \textbf{Total} & 22.2 & \textbf{30.3} & 22.2 & \textbf{30.8} & 25.3 & \textbf{31.2} \\
      \bottomrule
      \end{tabular}
  }
\end{wraptable}
\looseness=-1
\textbf{Retrieval from a compressed context (Q3).}
We conduct a needle-in-a-haystack experiment in which we first optimize a compact KV cache for the haystack prefix, and then evaluate exact-match passkey retrieval from the resulting compressed cache. We compress the haystack using the gradient-based method described earlier, training the compacted KVs for 100 steps on the reconstruction sequence. We find that \ourmethod improves retrieval accuracy over the base model, particularly at moderate compression budgets. \ourmethod\ increases mean retrieval accuracy by 6.4 points for \textsc{Qwen2.5-0.5B} and 5.2 points for \textsc{Qwen2.5-1.5B} (Table~\ref{tab:niah}), achieving an 11--19 point improvement at 30--50 percent keep ratios. See additional details in Appendix~\ref{app:niah-kv-compression}.

\looseness=-1
\textbf{Compressed long-context question answering (Q4).}
We evaluate on seven LongBench v2~\citep{bai2025longbench} tasks, where the context of each question is compressed using the reconstruction sequence before model prediction. As shown in Table~\ref{tab:lbv2-step20-selected-monotone}, compared to the base model, the \ourmethod-trained model achieves better average accuracy across all KV retention ratios, with an average improvement of up to \textbf{39\%}. These results demonstrate that \ourmethod yields consistent downstream gains, even when applied to LLMs using only an NTP objective and no post-training. See additional details in Appendix~\ref{app:lbv2-cache-reconstruction}.

%% file: sections/related_works.tex
\section{Related Work}

\looseness=-1
\textbf{Token-level context compression and retrieval.}
One approach to long-context inference reduces the number of input tokens before prompting, rather than modifying the model's hidden state. Retrieval-augmented generation keeps the prompt short by selecting only relevant external text~\citep{lewis2020rag}, while prompt-compression methods prune or rewrite the provided context in token space~\citep{jiang2023llmlingua,jiang2024longllmlingua,li2023compressing, oai, antho}.
These methods are complementary to latent KV cache compression.

\textbf{Learned memory and soft-token compression.}
Another line of work uses learned memory or soft tokens as compact substitutes for long contexts. Recurrent Memory Transformers add memory tokens that carry information between segments~\citep{bulatov2022rmt}; AutoCompressors adapt language models to map earlier segments into summary vectors that act as soft prompts~\citep{chevalier2023adapting}; and gist-tokens train models to compress prompts into reusable learned tokens~\citep{mu2023gist}. DuoAttention instead learns which attention heads require full retrieval over the KV cache, and which can run with a streaming-style state~\citep{xiao2025duoattention}. These methods modify the model's runtime interface, architecture, or cache policy.  Another direction~\citep{chen2024generative,charakorn2026doc,liu2026shine} uses hypernetworks to encode long contexts into model parameters, typically via LoRA adapters, rather than storing them in the prompt or KV cache.

\textbf{Post-hoc KV cache compression.}
Another line of work focuses on reducing the KV cache of pretrained transformers at inference time. Token-selection and eviction methods keep a subset of original KV slots using attention, recency, or query-dependent importance signals~\citep{zhang2023h2o,oren2024tova,li2024snapkv,cai2025pyramidkv,tang2024quest}. KVzip selects keys using reconstruction-style objectives over the context~\citep{kim2025kvzip}. Other post-hoc methods move beyond choosing a subset of the original tokens. Cartridges optimize a compact latent KV cache for each context~\citep{eyuboglu2025cartridges}, Attention Matching constructs compact keys and values to preserve attention behavior~\citep{zweiger2026fastkv}, and Lexico represents KV vectors with sparse codes over learned universal dictionaries~\citep{kim2025lexico}. Our work is orthogonal to these compressors: we ask whether the model can be trained so that the same post-hoc methods work better.

%% file: sections/conclusion.tex
\section{Conclusion}\label{sec:conclusion}

In this work, we argue that KV cache compressibility is not only a property of the task or input, but also of the representation learned by the transformer. Theoretically, we show  that almost any sequence-to-vector function admits both highly compressible and inherently non-compressible implementations. We also provide a motivating example which further suggests that standard training may favor simpler but non-compressible solutions over more structured, compressible ones, thereby motivating compression-aware training. Guided by these insights, we introduce \ourmethod, a compression-aware training procedure that promotes compressible representations. By incorporating a train-time KV sparsification policy with a fixed retention budget into the forward pass, \ourmethod exposes the model to compression constraints during training, encouraging representations that remain effective under cache compression. Empirically, we demonstrate that models trained with \ourmethod consistently improve the quality–budget tradeoff of state-of-the-art post-hoc compression methods, at matched cache budgets over a range of tasks.  Overall, our results suggest a complementary approach to post-hoc compression methods: training models to become compressible.

%% file: sections/acks.tex
\section{Acknowledgements}
YG is supported by the UKRI Engineering and Physical Sciences Research Council (EPSRC) CDT in Autonomous and Intelligent Machines and Systems (grant reference EP/S024050/1). MB is partially supported by the EPSRC Turing AI World-Leading Research Fellowship No. EP/X040062/1 and EPSRC AI Hub No. EP/Y028872/1. HM is supported by the Israel Science Foundation through a personal grant (ISF 264/23) and an equipment grant (ISF 532/23).

%% file: appendix/theory.tex
\section{Theory}
\label{app:theory}
\subsection{Transformers and KV Cache Compression}
\label{app:preliminaries}

\textbf{Notation.} 
For a set $\alb$, let $\alb^*$ denote the set of all finite sequences over $\alb$. We denote by $\ve_i$ the $i$-th standard (one hot encoded) basis vector. For $n \in \sN$, we write $[n] = \{1,\dots,n\}$.

For matrices $\mX \in \sR^{n \times d}$ and $\mY \in \sR^{n \times d'}$, we denote their row-wise concatenation (along the feature dimension) by
\begin{equation}
[\mX,\mY] \in \sR^{n \times (d + d')}.
\end{equation}

For matrices $\mX \in \sR^{n \times d}$ and $\mZ \in \sR^{n' \times d}$, we denote their column-wise concatenation (along the sequence dimension) by
\begin{equation}
\begin{bmatrix}
\mX \\
\mZ
\end{bmatrix}
\in \sR^{(n + n') \times d}.
\end{equation}

When convenient, we sometimes write  $[\mX^\top,\mZ^\top]^\top$ instead of $
\begin{bmatrix}
\mX \\
\mZ
\end{bmatrix}.$

For vectors $\vv, \vu \in \sR^n$, we treat them as column vectors in $\sR^{n \times 1}$ and write
\begin{equation}
[\vv,\vu] \in \sR^{n \times 2}.
\end{equation}

\begin{definition}[Attention Head]
Let $d_{\mathrm{in}}, d_{\mathrm{out}} \in \mathbb{N}$. 
An \emph{attention head} is a function
\begin{equation}
\mathrm{AttnHead} : (\sR^{d_{\mathrm{in}}})^* \to (\sR^{d_{\mathrm{out}}})^*
\end{equation}
defined as follows. For a sequence $\mX = [\vx_1, \dots, \vx_n]^\top \in \mathbb{R}^{n \times d_{\mathrm{in}}}$,
\begin{equation}
\label{eq:kv_compute}
\mQ = \mX \mW_Q,\quad \mK = \mX \mW_K,\quad \mV = \mX \mW_\mV,
\end{equation}
where $\mW_Q, \mW_K, \mW_V \in \mathbb{R}^{d_{\mathrm{in}} \times d_{\mathrm{out}}}$.
The output is
\begin{equation}
\label{eq:attention}
\mathrm{AttnHead}(X)
=
\mathrm{softmax}\!\left(\frac{\mQ \mK^\top}{\sqrt{d_k}}\right)\mV,
\end{equation}
where the softmax is applied row-wise.
\end{definition}

\begin{definition}[Transformer Block]
Fix $h, d_{\mathrm{in}}, d_{\mathrm{out}}, d_{\mathrm{ff}} \in \sN$. 
A \emph{transformer block} is a function
\begin{equation}
\block : (\sR^{d_{\mathrm{in}}})^* \to (\sR^{d_{\mathrm{out}}})^*
\end{equation}
defined as follows. For a sequence
\begin{equation}
\mX = [\vx_1, \dots, \vx_n]^\top \in \sR^{n \times d_{\mathrm{in}}},
\end{equation}
we define
\begin{equation}
\block(\mX)
=
\ffn\!\left(
[\mathrm{head}_1(\mX), \dots, \mathrm{head}_h(\mX)] \mW_O
+ \mX\right),
\end{equation}
where each $\mathrm{head}_i$ is an attention head with input and output dimensions $d_{\mathrm{in}}, d_{\mathrm{out}}$ respectively, and 
$\mW_O \in \sR^{h \cdot d_{\mathrm{out}} \times d_{\mathrm{out}}}$, and $\ffn$ is a 2-layer feed forward network applied row-wise, i.e.,
\begin{equation}
\ffn(\mX)_i = \ffn(\vx_i), \quad i = 1,\dots,n,
\end{equation}
with
\begin{equation}
\ffn(\vx) = \sigma(\vx \mW_1 + \vb_1)\mW_2 + \vb_2,
\end{equation}
where $\mW_1 \in \sR^{d_{\mathrm{out}} \times d_{\mathrm{ff}}}$,
$\mW_2 \in \sR^{d_{\mathrm{ff}} \times d_{\mathrm{out}}}$, 
$\vb_1 \in \sR^{d_{\mathrm{ff}}}$, and $\vb_2 \in \sR^{d_{\mathrm{out}}}$.
\end{definition}

\textbf{Note.} In the theoretical analysis, we omit layer normalization, causal masking, and the residual connection following the FFN for simplicity, as is standard in prior theoretical treatments (See e.g. ~\cite{sanford2023representational, yehudai2024can}). Our results extend to the full architecture with only minor modifications. All empirical evaluations are conducted with these components included.

\begin{definition}[Token Embedding]
Let $\alb$ be a finite alphabet. A \emph{token embedding} function is a function
\begin{equation}
\te : \alb \times \sN \to \sR^{d_{\mathrm{model}}}.
\end{equation}
Given a sequence $\va = (a_1,\dots,a_n) \in A^*$, its embedding is the sequence
\begin{equation}
\te(\va) = [\te(a_1,1), \dots, \te(a_n,n)]^\top \in \sR^{n \times d_{\mathrm{model}}}.
\end{equation}
\end{definition}

\begin{definition}[Transformer]
Let $\alb$ be a finite alphabet. A \emph{transformer} is a tuple
\begin{equation}
\model = (\block_1, \dots, \block_L, \te),
\end{equation}

where each $\block_\ell : (\sR^{d_{\ell-1}})^* \to (\sR^{d_{\ell}})^*$ is a transformer block, and $\te : A^* \to (\sR^{d_0})^*$ is a token embedding function.

The transformer $\model$ defines a function
\begin{equation}
\model : A^* \to \sR^{d_L}
\end{equation}
as follows. For $\va = (a_1, \dots, a_n) \in A^*$, let
\begin{equation}
\mX^{(0)} = \te(\va), \qquad 
\mX^{(\ell)} = \block_\ell(\mX^{(\ell-1)}), \quad \ell = 1,\dots,L.
\end{equation}
Then
\begin{equation}
\model(\va) = \mX^{(L)}_n,
\end{equation}
i.e., the output is the representation of the final token.
\end{definition}

\begin{definition}[KV Cache Compression]
\label{def:kv_compression}

A \emph{KV cache compression policy} is a tuple $\compress = (\cl_1, \dots, \cl_L)$, where each
\begin{equation}
\cl_\ell : (\sR^{d_\ell} \times \sR^{d_\ell})^* \to (\sR^{d_\ell} \times \sR^{d_\ell})^*
\end{equation}
maps a sequence of key--value pairs to a shorter sequence. 

Concretely, for
\begin{equation}
(\mK, \mV) = \bigl([\vk_1,\dots,\vk_n]^\top, [\vv_1,\dots,\vv_n]^\top\bigr),
\end{equation}
we write
\begin{equation}
\cl_\ell(\mK,\mV) = (\tilde{\mK}, \tilde{\mV})
= \bigl([\tilde{\vk}_1,\dots,\tilde{\vk}_{\length(n)}]^\top, [\tilde{\vv}_1,\dots,\tilde{\vv}_{\length(n)}]^\top\bigr),
\end{equation}
where $\length(n) \le n$. The function $\length(\cdot)$ is called the \emph{compression budget}.

\vspace{0.5em}
Given a transformer $\model = (\te, \block_1,\dots,\block_L)$, a compression policy $\compress$, and a \emph{context} sequence $\va = (a_1,\dots,a_n)$, we define the associated \emph{compressed transformer} $\model_{\compress,\va}$.

For an input sequence $\vb = (b_1,\dots,b_k)$, the output $\model_{\compress,\va}(\vb)$ is obtained by running the forward pass of $\model$ on the concatenated sequence $[\va,\vb] = (a_1,\dots, a_n, b_1,\dots,b_k)$, with the following modifications.



For each $\ell$ and each attention head at the $\ell$-th block, after computing keys and values, the compression function $\cl_\ell$ is applied to the KV pairs corresponding to the context tokens. The compressed KV pairs replace the original ones in the attention computation.

\vspace{0.5em}
More precisely, let $(\mK_\va, \mV_\va)$ denote the KV cache produced by the prefix $\va$ at some attention head of $\block_\ell$ of the original model $\model$, and let $\mY \in \sR^{k \times d_{\ell-1}}$ be the input corresponding to $\vb$ in the forward pass computation of $\model_{\compress, \va}$ after block $\ell-1$. We first compute
\begin{equation}
\label{eq:kv_compute2}
\mQ_\vb = \mY \mW_Q, \quad
\mK_\vb = \mY \mW_K, \quad
\mV_\vb = \mY \mW_V.
\end{equation}
We then compress the context KV pairs:
\begin{equation}
(\tilde{\mK}_\va, \tilde{\mV}_\va) = \cl_\ell(\mK_\va, \mV_\va).
\end{equation}
The resulting attention computation is
\begin{equation}
\label{eq:attention2}
\mathrm{AttnHead}(\mY)
=
\mathrm{softmax}\!\left(
\frac{1}{\sqrt{d_k}}
\mQ_\vb
\begin{bmatrix}
\tilde{\mK}_\va \\
\mK_\vb
\end{bmatrix}^\top
\right)
\begin{bmatrix}
\tilde{\mV}_\va \\
\mV_\vb
\end{bmatrix}.
\end{equation}

\end{definition}

\begin{definition}\label{def:compressibility2}[KV compressibility]
Let $N \in \sN$, $\varepsilon > 0$, and let $\length : \sN \to \sN$ be a budget function. A transformer $\model$ is said to be $(N, \varepsilon, \length)$-compressible if there exists a KV cache compression policy $\compress$ with budget $\length$ such that for every pair of sequences $\va, \vb$ of lengths $n$ and $k$ with combined length satisfying $n + k \le N$, it holds that
\begin{equation}
\| \model([\va, \vb]) - \model_{\compress, \va}(\vb) \| < \varepsilon.
\end{equation}
\end{definition}

We include $N$ explicitly in the definition above, as many natural sequence-to-vector functions require model dimension scaling with $N$ (e.g., $O(N)$ or $O(\log N)$) in order to achieve arbitrarily accurate approximation; see \citet{sanford2023representational, yehudai2024can} for examples.

\subsection{Motivating example: histogram computation}
\label{app:motivating_example}
In this section we provide formal proofs of Propositions \ref{prop:hist_non_compressible} and \ref{prop:hist_compressible}.

\begin{proof}[Proof of Proposition \ref{prop:hist_non_compressible}]
The construction of $\model = (\te, \block_1, \block_2)$ is straightforward. First, for $j \in [m]$ and $i \in [N]$, define
\begin{equation}
\te(j,i) = \vu_{j} + \vp_i,
\end{equation}
where $\vu_j, \vp_i \in \sR^m$ encode token value and token position respectively, and assume that $\te$ is injective over $[m] \times [N]$.

Let $\mX \in \sR^{n \times m}$. We define $\block_1 (\sR^{m})^* \to (\sR^{2m})^*$ by
\begin{equation}
\block_1(\mX)_i \approx \rho_1(\mX_i)
\end{equation}
where $\rho_1:\sR^{m} \to \sR^{2m}$ is applied row wise $\mX$ and satisfies for each $j,i \in [m] \times [N]$:

\begin{equation}
    \rho_1(\vu_j +\vp_i) = \begin{bmatrix}\ve_{j}\\ \mathbf{0}\end{bmatrix}. 
\end{equation}

$\block_1$ can be implemented by zeroing out all attention projections and relying on the residual connection together with a feedforward network that approximates $\rho_1$ to arbitrary precision (this can be obtained as a two-layer feedforward network is a universal approximator of continuous functions on compact sets \citep{cybenko1989approximation,hornik1991approximation}).


Let $\mX \in \sR^{n \times 2m}$, and write each row as $\vx_i$. We define
\begin{equation}
\block_2(\mX)_i = \frac{1}{n} \sum_{j=1}^n \vx_j
\end{equation}
i.e., each output token is replaced by the average of all input tokens. This operation can be implemented using a single attention head by setting
\begin{equation}
\mW_Q = \mW_K = \mathbf{0},
\end{equation}
so that all attention weights are uniform, and choosing
\begin{equation}
\mW_V = \mW_O =
\begin{pmatrix}
\mathbf{0} & \mathbf{0} \\
\mI & \mathbf{0}
\end{pmatrix}.
\end{equation}
This choice extracts the first half of each vector and places it in the second half before averaging. After applying a residual connection, the intermediate representation of the $i$-th token is then

\begin{equation}
\begin{bmatrix} \vx_i \\ \frac{1}{n} \sum_{j=1}^n \vx_j
    
\end{bmatrix}
\end{equation}

Finally, the feedforward network is taken to be 
\begin{equation}
\ffn(\vx, \vy) = \vy,
\end{equation}

resulting in a final representation of $\frac{1}{n}\sum_{i=1}^n \vx_i$. For input sequence $\va$ of length $n$ we thus have 
\begin{equation}
    \model(\va) \approx \frac{1}{n}\sum_{i=1}^n \ve_{a_i}  =\left(
\frac{1}{n}\sum_{i=1}^n \mathbf{1}_{a_i=1},\;
\frac{1}{n}\sum_{i=1}^n \mathbf{1}_{a_i=2},\;
\dots,\;
\frac{1}{n}\sum_{i=1}^n \mathbf{1}_{a_i=m}
\right) = \hist(\va),
\end{equation}
Thus $\model$ approximates $\hist$ to arbitrary precision. Now let $\compress = (\cl_1, \cl_2)$ be any compression policy with budget function $\length(\cdot)$ satisfying $\length(N-1) < N-1$. Fix a prefix sequence $\va$ of length $n$ and a suffix sequence $\vb$ of length $k$, and consider the computation of $\model_{\compress,\va}(\vb)$.

First, observe that in the first block all attention projection matrices (and in particular $\mW_O$) are zero. Hence, $\cl_1$ has no effect on the representations of the tokens of $\vb$, and after the first block the representations are exactly as in the uncompressed model.

Next, consider the second block. Let $(\mK_\va,\mV_\va)$ denote the KV cache corresponding to the prefix $\va$ computed durning the computation of $\model(\va)$, and let $(\tilde{\mK}_\va,\tilde{\mV}_\va) = \cl_2(\mK_\va,\mV_\va)$ be the compressed cache. Since $\mW_Q = 0$, all attention scores are uniform for both $\model$ and $\model_{\compress, \va}$ and so

\begin{equation}
    \model_{\compress, \va}(\vb) = \frac{1}{\length(n)+k} \left( \sum_{i=1}^{\length(n)} \tilde \vv_i + \sum_{i=1}^k \ve_{b_i} \right).
\end{equation}

\begin{equation}
    \model([\va,\vb]) = \frac{1}{n+k} \left( \sum_{i=1}^{n} \ve_{a_i} + \sum_{i=1}^k \ve_{b_i} \right).
\end{equation}

For notational convenience, define
\begin{equation}
\tilde \vv = \frac{1}{\length(n)+k} \sum_{i=1}^{\length(n)} \tilde \vv_i, \quad
\tilde \va = \frac{1}{n+k} \sum_{i=1}^{n} \ve_{a_i}, \quad
\tilde \vb =  \sum_{i=1}^{k} \ve_{b_i}.
\end{equation}

Then the difference between the compressed and full outputs can be written as

\begin{equation}
    \| \model_{\compress, \va}(\vb) - \model([\va,\vb]) \|
    =
    \left\|
    \tilde \vv - \tilde \va
    + \left(\frac{1}{\length(n)+k} - \frac{1}{n+k}\right) \tilde \vb
    \right\|.
\end{equation}

We now derive a lower bound. First, choose $\vb = (1,1,\dots,1)$ and examine the first coordinate. This yields
\begin{equation}
    \| \model_{\compress, \va}(\vb) - \model([\va,\vb]) \|
    \geq
    \left|
    \tilde \vv_1 - \tilde \va_1
    + \frac{k(\length(n)-n)}{(n+k)(\length(n)+k)}
    \right|.
\end{equation}
Next, choosing $\vb = (2,2,\dots,2)$ removes the contribution of the last term, giving
\begin{equation}
    \| \model_{\compress, \va}(\vb) - \model([\va,\vb]) \|
    \geq
    |\tilde \vv_1 - \tilde \va_1|.
\end{equation}

Combining the two inequalities, we conclude that for any choice of $\tilde \vv$, there exists a $\vb$ such that
\begin{equation}
    \| \model_{\compress, \va}(\vb) - \model([\va,\vb]) \|
    \geq
    \frac{k(n - \length(n))}{2(n+k)(\length(n)+k)}.
\end{equation}
Finally, taking $n = N-1$ and $k=1$ yields
\begin{equation}
    \| \model_{\compress, \va}(\vb) - \model([\va,\vb]) \| \geq \frac{N-1 - \length(N-1)}{2(N)(\length(N-1)+1)} = \const > 0,
\end{equation}
which completes the proof.
\end{proof}

\textbf{Note:} for the weakest compression case $\length(n) = n-1$, the constant above is $\const = O(N^{-2})$, and so as $N$ grows the approximation bound becomes weaker, but for the extreme compression case $\length(n) = c$ for some $c \in \sN$, $\const$ can be chosen independently of $N$.

\begin{proof}[proof of Proposition \ref{prop:hist_compressible}]

We construct $\model = (\te, \block_1, \block_2)$ similarly to the above construction, with a few modifications. Like before, for $j \in [m]$ and $i \in [N]$, define
\begin{equation}
\te(j,i) = \vu_{j} + \vp_i,
\end{equation}
where $\vu_j, \vp_i \in \sR^m$ encode token value and token position respectively, and assume that $\te$ is injective over $[m] \times [N]$.

Let $\mX \in \sR^{n \times m}$. We define $\block_1 (\sR^{m})^* \to (\sR^{4m})^*$ by
\begin{equation}
\block_1(\mX)_i \approx \rho_1(\mX_i)
\end{equation}
where $\rho_1:\sR^{m} \to \sR^{4m}$ is applied row wise $\mX$ and satisfies for each $j,i \in [m] \times [N]$:

\begin{equation}
    \rho_1(\vu_j +\vp_i) = \begin{bmatrix}\ve_{j}\\ 
    \vp_i\\
    \mathbf{0}\\
    \mathbf{0}\end{bmatrix}. 
\end{equation}

Where each of the vector blocks above is of size $m$. That is, $\block_1$ is identical to the construction in the proof of Proposition~\ref{prop:hist_non_compressible}, except that it preserves the positional component. As before, $\block_1$ can be implemented by zeroing out all attention projections and relying on the residual connection together with a feedforward network that approximates $\rho_1$ to arbitrary precision. This follows from the universal approximation property of two-layer feedforward networks for continuous functions on compact sets \citep{cybenko1989approximation,hornik1991approximation}.

We construct $\block_2$ to be composed of a single attention head with

\begin{equation}
\mW_Q = \mW_K = \mathbf{0}, \mW_O = \mI
\end{equation}
so that all attention weights are uniform, and choosing
\begin{equation}
\mW_V =
\begin{pmatrix}
\mI & \mathbf{0} & \mathbf{0} & \mathbf{0}\\
\mathbf{0} & \mI & \mathbf{0} & \mathbf{0}\\
\mI & \mathbf{0} & \mathbf{0} & \mathbf{0}\\
\mathbf{0} & \mathbf{0} & \mathbf{0} & \mathbf{0}
\end{pmatrix}.
\end{equation}

This choice extracts the first m-length block of each vector and places it in the third block before averaging, keeping the first and second vector blocks the same, and zeroing out the last vector block. After applying a residual connection, the intermediate representation of the $i$-th token $a_i$ is then

\begin{equation}
\label{eq:attention_compressed_hist}
\begin{bmatrix} 
\ve_{a_i} \\ 
\vp_i \\ 
\frac{1}{n} \sum_{j=1}^n \ve_j\\
\mathbf{0}
\end{bmatrix}.
\end{equation}

finally, we choose the last feed forward network to approximate a map $\rho_2$  satisfying for all $i,j,k \in [N]$:

\begin{equation}
\label{eq:ff_compressed_hist}
    \rho_2(\vw, \vx,\vy,\vz) = \begin{cases}
        \vy & \text{ if } \vz=0 \\
        \frac{j - i + 1}{j}\vy  & \text{ if } \vz=\frac{\vp_i}{k+1} \text{ and } \vx = \vp_j.
    \end{cases}
\end{equation}

Such a function can be realized to arbitrary precision by a feedforward network: the above specification is defined on a union of disjoint compact sets, and therefore admits a continuous extension to a compact domain, which can in turn be approximated by a standard feedforward network via universal approximation. 

Equations~\ref{eq:attention_compressed_hist} and~\ref{eq:ff_compressed_hist} imply that, in the uncompressed setting, for any sequence $\va$ of length $n \le N$, we recover as before

\begin{equation}
    \model(\va) \approx \frac{1}{n}\sum_{i=1}^n \ve_{a_i}  =\left(
\frac{1}{n}\sum_{i=1}^n \mathbf{1}_{a_i=1},\;
\frac{1}{n}\sum_{i=1}^n \mathbf{1}_{a_i=2},\;
\dots,\;
\frac{1}{n}\sum_{i=1}^n \mathbf{1}_{a_i=m}
\right) = \hist(\va),
\end{equation}

and so $\model$ approximates $\hist$.  We now define a compression policy $\compress = (\cl_1, \cl_2)$. We first note that, like before, SINCE $\mW_O=\mathbf{0}$ in the first block,  $\cl_1$ does not effect the forward pass of $\model_{\compress, \va}$.

Given a prefix $\va$ of length $n$ with KV cache $(\mK_\va,\mV_\va)$, we compress it into a single KV pair:
\begin{equation}
\cl_2(\mK_\va,\mV_\va) = \Bigl( \mathbf{0},
\begin{bmatrix}
\sum_{i=1}^n \ve_{a_i} \\
\mathbf{0} \\
\mathbf{0} \\
\vp_n
\end{bmatrix}
\Bigr).
\end{equation}
That is, we store a single value vector containing (i) the unnormalized histogram of the prefix and (ii) its length encoded via $\vp_n$. Now consider processing a suffix $\vb = (b_1,\dots,b_k)$. Since the compressed prefix consists of a single KV entry with zero key, attention again produces a uniform average over the compressed prefix and suffix tokens. The representation of the final token $b_k$ becomes
\begin{equation}
\label{eq:main_compressable_hist_rep_clean2}
\begin{bmatrix}
\ve_{b_i} \\
\vp_{n+k} \\
\frac{1}{k+1} \Bigl(\sum_{i=1}^n \ve_{a_i} + \sum_{i=1}^k \ve_{b_i}\Bigr) \\
\frac{1}{k+1}\vp_n
\end{bmatrix}.
\end{equation}

Applying $\rho_2$ and using \eqref{eq:ff_compressed_hist}, we obtain
\begin{equation}
\model_{\compress, \va}(\vb) \approx \frac{1 + (n+k) - n}{n+k}  \cdot \frac{1}{k+1}\Bigl(\sum_{i=1}^n \ve_{a_i} + \sum_{i=1}^k \ve_{b_i}\Bigr)
=
\frac{1}{n+k} \sum_{i=1}^{n+k} \ve_{c_i},
\end{equation}
where $c_i$ ranges over $[\va,\vb]$. This matches the histogram of the concatenated sequence (up to $\varepsilon$), completing the proof.
    
\end{proof}

\subsection{Compressible Transformers for General Functions}
\label{app:main_theorem}

In this section, we present a formal proof of Theorem~\ref{thm:main_theorem}. The proof is divided into two lemmas: the first (Lemma \ref{lemma:compressible_main}) establishes the existence of compressible transformer approximations, and the second (Lemma \ref{lemma:non_compressible_main} demonstrates the existence of non-compressible ones. The theorem then follows immediately.

\begin{lemma}
\label{lemma:compressible_main}
    Let $\alb$ be a finite alphabet, and let $\func : \bigcup_{n \le N} \alb^n \to \mathbb{R}^{d_{\text{out}}}$ be any sequence-to-vector function. Then for every $\varepsilon > 0$ There exists a transformer such that

    \begin{enumerate}
    \item (\textbf{Approximation}) For every sequence $\va = (a_1,\dots,a_n) \in A^n$ with $n \le N$,
    \begin{equation}
    \|\func(\va) - \model(\va)\| < \varepsilon.
    \end{equation}
    
    \item (\textbf{Maximal compressibility}) There exists a KV cache compression policy $\compress$ with compression budget
    \begin{equation}
    \length(n) \equiv 1
    \end{equation}
    such that for every prefix $\va \in A^n$ and suffix $\vb \in A^k$ with $k+n \le N$,
    \begin{equation}
    \label{eq:fully_compressible}
    \| \model([\va, \vb]) - \model_{\compress,\va}(\vb) \| < \varepsilon.
    \end{equation}
\end{enumerate}
    
\end{lemma}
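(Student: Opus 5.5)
We follow the blueprint of Proposition~\ref{prop:hist_compressible}, replacing the histogram with an \emph{injective} summary of the whole sequence. Then the final feedforward network can decode any target value $\func(\cdot)$, both from the uncompressed summary and from a one-slot compressed summary. Concretely, take $\te(j,i)=\vu_j+\vp_i$, injective on $\alb\times[N]$, with all $\vp_i\neq\mathbf{0}$. Let $\block_1$ have all attention projections zero (so $\cl_1$ is irrelevant, exactly as before). Its feedforward network approximates a map $\rho_1$ with
\begin{equation}
\rho_1(\vu_j+\vp_i)=\bigl[\ve_{(j,i)}^\top,\ \vp_i^\top,\ \mathbf{0}^\top,\ \mathbf{0}^\top\bigr]^\top ,
\end{equation}
where $\ve_{(j,i)}\in\sR^{|\alb|N}$ is a one-hot code of the (symbol, position) pair. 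This is possible by universal approximation on the finite input set. In $\block_2$ we use a single head with $\mW_Q=\mW_K=\mathbf{0}$ and $\mW_O=\mI$. We choose $\mW_V$ to copy the first block into the third block and zero out the rest of the value vector, except that the fourth block carries whatever the value stores there (zero for genuine tokens). After the residual connection, the last token of a length-$n$ input $\vc$ then carries $\vp_n$ in block two, $\frac1n\sum_i\ve_{(c_i,i)}$ in block three, and $\mathbf{0}$ in block four. Because positions are distinct, $\sum_i \ve_{(c_i,i)}$ determines $\vc$ uniquely, and $\vp_n$ determines $n$.

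For compression, $\cl_2$ maps the prefix cache to the single pair with key $\mathbf{0}$ and value $[\mathbf{0}^\top,\mathbf{0}^\top,\mathbf{S}_\va^\top,\vp_n^\top]^\top$ placed so that it lands in blocks three and four, where $\mathbf{S}_\va=\sum_{i\le n}\ve_{(a_i,i)}$. Here $n$ is read off as the number of rows, and $\mathbf{S}_\va$ is read off from the value vectors, which contain the one-hot codes; $\cl_2$ may be an arbitrary function. In $\model_{\compress,\va}(\vb)$ the suffix tokens sit at positions $n+1,\dots,n+k$. Uniform attention over $k+1$ slots therefore gives the last token
\begin{equation}
\Bigl[\,\cdot\,,\ \vp_{n+k},\ \tfrac{1}{k+1}\bigl(\mathbf{S}_\va+\textstyle\sum_{i\le k}\ve_{(b_i,n+i)}\bigr),\ \tfrac{1}{k+1}\vp_n\Bigr].
\end{equation}
This point determines $n+k$ from block two, and $n$ and $k$ from block four (using $\vp_n\neq\mathbf{0}$ and injectivity of the $\vp_i$). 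Multiplying block three by $k+1$ then recovers the full code of $[\va,\vb]$.

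Define $\rho_2$ on the finite set of all reachable last-token representations. On uncompressed points (block four $=\mathbf{0}$) set $\rho_2(\cdot)=\func(\vc)$, where $\vc$ is decoded from $n\cdot$(block three) with $n$ read from $\vp_n$. On compressed points (block four $\neq\mathbf{0}$) set $\rho_2(\cdot)=\func([\va,\vb])$, decoded as above. The check that this is well defined is the following. Uncompressed and compressed points are separated by block four. Within each family, the point determines the underlying sequence, so no point receives two conflicting values. A function on a finite set extends continuously to a compact set containing it, and a two-layer network approximates that extension uniformly there. This gives approximation, $\|\func(\va)-\model(\va)\|<\varepsilon$, and maximal compressibility, $\|\model([\va,\vb])-\model_{\compress,\va}(\vb)\|<\varepsilon$ with $\length\equiv1$.

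The main obstacle is bookkeeping of approximation errors: $\block_1$ is only approximate, so the actual inputs to $\rho_2$ are perturbations of the ideal finite point set. I would handle this by fixing the target points first and taking a radius $\delta$ smaller than half their minimal pairwise distance. I would then approximate $\rho_2$ by a continuous function that is constant on $\delta$-balls, within $\varepsilon/2$ uniformly on their union. Finally, I would choose the $\block_1$ approximation accurate enough that every propagated perturbation stays inside these balls. This works because attention with zero queries and the fixed linear maps are Lipschitz, and $N$ is finite. The exact form of $\cl_2$ should also use the ideal codes recovered from the approximate values, for example by rounding to the nearest code, which again relies on finiteness.
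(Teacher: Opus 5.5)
Your argument is correct, and its skeleton matches the paper's. Both use a zero-attention first block whose feedforward map writes a per-token feature together with $\vp_i$. Both use a uniform-attention second block that averages those features into a third slot, a single compressed value slot carrying the unnormalized prefix sum and $\vp_n$, and a final feedforward map that corrects the scale.

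You genuinely depart from the paper in how the per-token features and the decoder are obtained. The paper extends $\func$ to a continuous permutation-invariant function on padded, permuted embedding matrices. It then invokes the Deep Sets theorem to write this function as $\rho(\sum_i \phi(\cdot))$. Its decoder $\tilde\rho$ rescales by $j$ or $j-i+1$ and applies $\rho$, relying on equal-norm positional encodings so that $i$ can be recovered from $\vp_i/(k+1)$. You instead take $\phi$ to be the one-hot code of the (symbol, position) pair, which makes the sum injective on sequences outright. Your decoder is then a lookup table on the finite set of reachable last-token representations.

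Your route is more elementary: no symmetrization, padding, or Deep Sets theorem is needed. On a finite domain your code is essentially the trivial case of that theorem. Your $\delta$-ball treatment of approximation errors, together with rounding inside $\cl_2$, is also more careful than the paper's. The paper prescribes $\tilde\rho$ on unbounded sets and leaves the error control informal. The cost is an explicit width $|\alb|N$, which the paper's remark that $d_1=O(N)$ may be required already concedes. The paper's route would allow a smaller latent whenever $\func$ admits a lower-dimensional sum decomposition.

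One justification needs repair. You say block four, together with $n+k$ from block two, determines $n$ and $k$, using only injectivity and non-vanishing of the $\vp_i$. That does not hold in general. For $n+k=4$, the splits $(n,k)=(1,3)$ and $(3,1)$ place $\tfrac14\vp_1$ and $\tfrac12\vp_3$ in block four. These coincide if $\vp_3=\tfrac12\vp_1$, and the embedding can still be injective.

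There are two easy fixes. You can choose the $\vp_i$ pairwise non-parallel, for instance of equal norm as the paper does. Alternatively, read $k$ from block three, whose nonzero entries all equal $1/(k+1)$. In fact the support of block three alone identifies $[\va,\vb]$, and the target $\func([\va,\vb])$ does not depend on the split, so $\rho_2$ is well defined in any case. With your encoding, the positional correction in block four is not even needed. This nicely illustrates the paper's point that compressibility is a property of the chosen representation.
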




    


\begin{proof}

We assume that the token embedding map $\te : \alb \times \sN \to \sR^{d_0}$ is defined by

\begin{equation}
\te(a,i) = \vu_a + \vp_i
\end{equation}

where $\|\vp_i\| = \|\vp_j\|$ for all $i,j \in [N]$, and $\vu_a, \vp_i \in \sR^{d_0}$ for all $a \in A$. We further assume injectivity of the embedding function, that is

\begin{equation}
(a,i) \neq (b,j) \;\Rightarrow\; \te(a,i) \neq \te(b,j).
\end{equation}

Such embeddings can be obtained, for example, using a learned lookup table for tokens combined with a positional encoding (e.g., sinusoidal) with sufficiently large frequencies.

Let $\vu_\emptyset \in \sR^{d_0}$ be a padding vector such that $\vv_\emptyset \neq \te(a,i)$ for all $a \in A$ and $i \in \sN$. For a sequence $\va = (a_1,\dots,a_n) \in A^*$ with $n \le N$, define
\begin{equation}
U_{\va}
=
\left\{
\mP [\te(a_1,1), \dots, \te(a_n,n), \vu_\emptyset, \dots, \vu_\emptyset]^\top \mid \mP \in \sR^{N \times N} \text{ is a permutation matrix}
\right\}
\subset \sR^{N \times d_0}.
\end{equation}
That is, we embed the sequence, pad it to length $N$ using $\vu_\emptyset$, and then apply an arbitrary permutation to the order of the elements.

By injectivity of the embeddings, the sets $\{U_{\va} : \va \in A^*,\, |\va|\le N\}$ are pairwise disjoint. Consequently, there exists a continuous function
\begin{equation}
\bar{\func} : \sR^{N \times d_0} \to \sR^{d_{\mathrm{out}}}
\end{equation}
such that for every $\va \in A^*$ with $|\va|\le N$ and every $[\vv_1, \dots, \vv_N]^\top \in U_{\va}$,
\begin{equation}
\bar{\func}(\vv_1, \dots, \vv_N) = \func(\va).
\end{equation}

Without loss of generality, we may assume that $\bar{\func}$ is permutation-invariant. That is, for every permutation $\sigma \in S_N$ and every $\mV = [\vv_1,\dots,\vv_N]^\top \in \sR^{N \times d_0}$,
\begin{equation}
\bar{\func}(\vv_1,\dots,\vv_N)
=
\bar{\func}(\vv_{\sigma(1)},\dots,\vv_{\sigma(N)}).
\end{equation}

Indeed, if $\bar{\func}$ is not permutation-invariant, we can define
\begin{equation}
\tilde{\func}(\vv_1,\dots,\vv_N)
=
\frac{1}{|S_N|}
\sum_{\sigma \in S_N}
\bar{\func}(\vv_{\sigma(1)},\dots,\vv_{\sigma(N)}).
\end{equation}
Then $\tilde{\func}$ is continuous and permutation-invariant. Moreover, for every sequence $\va \in A^*$ and every $\mV \in U_{\va}$,
\begin{equation}
\tilde{\func}(\mV) = \bar{\func}(\mV),
\end{equation}
since $U_{\va}$ is closed under permutations.

Because $\bar{\func}$ is continuous and permutation-invariant, it follows from \citet{zaheer2017deep} that there exists a pair of  functions 
$\phi : \sR^{d_0} \to \sR^{d_1}$ and $\rho : \sR^{d_1} \to \sR^{d_{\mathrm{out}}}$ such that
\begin{equation}
\bar{\func}(\vv_1,\dots,\vv_N)
=
\rho\!\left(\sum_{i=1}^N \phi(\vv_i)\right)
\end{equation}

(Note that some functions require $d_1 = O(N)$). We further assume without loss of generality that $\phi(\vv_\emptyset) = 0$. We now construct a compressible transformer $\model = (\te, \block_1, \block_2)$ approximating $\func$ using this decomposition. The construciton of this transformer cloesly resembles the one used in the proof of Proposition \ref{prop:hist_compressible}.

For the first block $\block_1$, we set all attention projection matrices to zero,
\begin{equation}
\mW_Q = \mW_K = \mW_V = \mW_O = \mathbf{0},
\end{equation}
so that the block reduces to its feedforward component.

Let $\tilde{\phi} : \sR^{d_0} \to \sR^{4d_0}$ be a continuous function satisfying
\begin{equation}
\tilde{\phi}(\te(a,i)) =
\begin{bmatrix}
\phi(\te(a,i)) \\
\vp_i\\
\mathbf{0} \\
\mathbf{0} 
\end{bmatrix}.
\end{equation}
for all $a \in A$ and $i \in [N]$ where of the four vector blocks above are of size $d_0$. That is, $\tilde{\phi}$ computes $\phi$ on the embedding while preserving positional information and padding with additional zeroes. We choose the feedforward map $\ffn_1$ in this block to approximate $\tilde{\phi}$ to accuracy $\varepsilon' > 0$, to be specified later (This is possible as 2-layer feed forwand networks are universal approximators of continuous functions on compact sets \citep{cybenko1989approximation,hornik1991approximation}). Thus, for any sequence $\va = (a_1,\dots,a_n)$,
\begin{equation}
\label{eq:block1_forward}
\block_1(\te(\va))
\approx
\begin{bmatrix}
\tilde{\phi}(\te(a_1,1)) \\
\vdots \\
\tilde{\phi}(\te(a_n,n))
\end{bmatrix}.
\end{equation}

For the second block $\block_2$, we again use a single attention head, and set
\begin{equation}
\mW_Q = \mW_K = \mathbf{0}, \mW_O = \mI
\end{equation}

The value projection $\mW_V$ is chosen to extract the $\phi$-component from the output of $\tilde{\phi}$, and copy it to the third $d_1$-sized vector block,  keep the first $2 d_1$ coordinates the same, and zero out the remaining coordinates. That is:

\begin{equation}
\mW_V =
\begin{pmatrix}
\mI & \mathbf{0} & \mathbf{0} & \mathbf{0}\\
\mathbf{0} & \mI & \mathbf{0} & \mathbf{0}\\
\mI & \mathbf{0} & \mathbf{0} & \mathbf{0}\\
\mathbf{0} & \mathbf{0} & \mathbf{0} & \mathbf{0}
\end{pmatrix}.
\end{equation}




Since $\mW_Q = \mW_K = \mathbf{0}$, all attention scores are identical, and hence the attention weights are uniform.  Applying the attention update in $\block_2$ to the outputs of $\block_1$ and using a residual connection, the representation at position $i$ becomes approximately
\begin{equation}
\begin{bmatrix}
    \tilde{\phi}(\te(a_i,i))\\
    \vp_i\\
    \frac{1}{n} \sum_{j=1}^n \phi(\te(a_j,j)) \\
    \mathbf{0}
\end{bmatrix}.
\end{equation}

We now choose the feedforward network of $\block_2$ to approximate up to $\varepsilon''$ (to be chosen later) a continuous function $\tilde \rho$ satisfying

\begin{equation}
\label{eq:ff_compressed_geneal}
    \tilde \rho(\vw, \vx,\vy,\vz) = \begin{cases}
        \rho (j \cdot \vy) & \text{ if } \vz=0  \text{ and } \vx = \vp_j.\\
        \rho((j - i + 1) \cdot \vy)  & \text{ if } \vz=\frac{\vp_i}{k+1} \text{ and } \vx = \vp_j.
    \end{cases}
\end{equation}


Intuitively, this map recovers the position index $i$ from the positional embedding $\vp_i$, rescales the averaged sum as necessary to recover the unweighted sum, and applies $\rho$. In particular, at the final position $n$, the output of $\model$ is approximately
\begin{equation}
\rho\!\left(
n \cdot \frac{1}{n} \sum_{j=1}^n \phi(\te(a_j,j))
\right)
=
\rho\!\left(\sum_{j=1}^n \phi(\te(a_j,j))\right)
=
\func(\va).
\end{equation}

By choosing $\varepsilon', \varepsilon''$ sufficiently small, it follows that $\model$ approximates $\func$ to arbitrary precision.

We now show that there exists a compression policy with $\length(n) = 1$ that satisfies Equation~\ref{eq:fully_compressible}. 

Define $\compress = (\cl_1, \cl_2)$ as follows. For any sequence of key--value pairs $(\mK,\mV)$ with $\mV = [\vv_1,\dots,\vv_n]^\top$,

\begin{equation}
\cl_1(\mK,\mV) = (\mathbf{0}, \mathbf{0}),
\end{equation}
and
\begin{equation}
\cl_2(\mK,\mV) = (\tilde{\mK}, \tilde{\mV}),
\end{equation}
where $\tilde{\mK} = \mathbf{0}$ and
\begin{equation}
\tilde{\mV}
=
\begin{pmatrix}
\mI & \mathbf{0} & \mathbf{0} & \mathbf{0} \\
\mathbf{0} & \mathbf{0}& \mathbf{0} & \mathbf{0} \\
\mathbf{0} & \mathbf{0} & \mathbf{0} & \mathbf{0} \\
\mathbf{0} & \mathbf{0} & \mathbf{0} & \mathbf{0}
\end{pmatrix}
\sum_{i=1}^n \vv_i
\;+\;
\begin{pmatrix}
\mathbf{0} & \mathbf{0} & \mathbf{0} & \mathbf{0} \\
\mathbf{0} & \mathbf{0} & \mathbf{0} & \mI \\
\mathbf{0} & \mathbf{0} & \mathbf{0} & \mathbf{0} \\
\mathbf{0} & \mathbf{0} & \mathbf{0} & \mathbf{0}
\end{pmatrix}
\vv_n.
\end{equation}

clearly $\length(n)=1$.

For a prefix $\va$ of length $n$ and a suffix $\vb$ of length $k$, we begin by analyzing the forward pass of $\model_{\compress, \va}(\vb)$. After the first block, the representations of the tokens $b_i$ are given by 

\begin{equation}    
\begin{bmatrix}
\tilde{\phi}(\te(b_1,1+n)) \\
\vdots \\
\tilde{\phi}(\te(b_k,k+n))
\end{bmatrix}.
\end{equation}
This follows because the embedding layer of $\model_{\compress, \va}$ is identical to that of $\model$, applied to the concatenated sequence $[\va, \vb]$. Moreover, the computation of $\block_1$ is applied independently to each token representation, and is therefore unaffected by compression.
Next, consider the attention computation in the second block. Since $\mW_Q = 0$, all attention scores are zero, and hence the attention weights are uniform, equal to $\frac{1}{1+k}$. By the definition of $\cl_2$, it follows that after the attention update in $\block_2$, the representation of each token $b_i$ is

\begin{equation}
\begin{bmatrix}
    \phi(\te(b_i,n+i))\\
    \ve_{n+i}\\
    \frac{1}{1+k} \biggl( \sum_{j=1}^{n} \phi(\te(a_j,j)) + \sum_{j=1}^k \phi(\te(b_j,n+j)) \biggr)\\
    \frac{1}{k+1} \vp_n 
\end{bmatrix}.
\end{equation}
Finally, after applying the feed forward network $\ffn$ ,  by choosing $\varepsilon', \varepsilon''$ small enough, Equation \ref{eq:ff_compressed_geneal} implies that the final output of $\model_{\compress, \va}(\vb)$ is approximately 
\begin{equation}
 \rho\!\left(
(k+n-n+1) \cdot \frac{1}{k+1}  \bigl( \sum_{j=1}^n \phi(\te(a_j,j))
+ \sum_{j=1}^k \phi(\te(b_j,n+j)) \bigr) \right)
\approx
\func([\va, \vb])
\end{equation}

and is in an $\varepsilon$ approximator of  $\model_{\compress, \va}(\vb)$, completing the proof.
Finally, after applying the feedforward network $\ffn$, and choosing $\varepsilon', \varepsilon''$ sufficiently small, Equation~\ref{eq:ff_compressed_geneal} implies that the final output of $\model_{\compress, \va}(\vb)$ is approximately

and thus constitutes an $\varepsilon$-approximation of $\model_{\compress, \va}(\vb)$, completing the proof.

\end{proof}

\begin{lemma}
\label{lemma:non_compressible_main}
    Let $\alb$ be a finite alphabet, and let $\func : \bigcup_{n \le N} \alb^n \to \mathbb{R}^{d_{\text{out}}}$ be any sequence-to-vector function. Assume additionally that for  some prefix sequence $\bar \va$ of length $n<N$ there exists two suffix sequences $\vb^1$, $\vb^2$ both of length $k \le N-n$ such that 
    \begin{equation}
    \label{eq:compressabillity_condition}
        \func([\va, \vb^1]) \neq \func([\bar \va, \vb^2])
    \end{equation}

    Then for every $\varepsilon, \const > 0$ There exists a transformer such that

    \begin{enumerate}
    \item (\textbf{Approximation}) For every sequence $\va = (a_1,\dots,a_n) \in A^n$ with $n \le N$,
    \begin{equation}
    \label{eq:approximate_non_compressible}
    \|\func(\va) - \model(\va)\| < \varepsilon.
    \end{equation}
    
    \item (\textbf{Non- compressibility}) For every KV cache compression policy $\compress$ with compression budget satisfying $\length(n)<n$
    
    there exists a suffix $\vb \in A^k$ with $k+n \le N$ such that,
    \begin{equation}
    \label{eq:fully_compressible2}
    \| \model([\va, \vb]) - \model_{\compress,\va}(\vb) \| > \const.
    \end{equation}
\end{enumerate}
    
\end{lemma}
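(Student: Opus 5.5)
The plan is to reuse the two-block deep-sets architecture from the proof of Lemma~\ref{lemma:compressible_main}, but with the positional correction mechanism removed. In its place, the final feedforward network will blow up whenever the averaged statistic it reads is ``inconsistent'' with the position it sees. Concretely, I take $\te(a,i)=\vu_a+\vp_i$ injective, as before. $\block_1$ has zero attention and an FFN approximating $\tilde\phi(\te(a,i))=[\phi(\te(a,i));\vp_i;\mathbf 0]$. $\block_2$ uses one head with $\mW_Q=\mW_K=\mathbf 0$ (uniform attention) and a $\mW_V$ copying the $\phi$-block into a third block. At the last token of a length-$j$ sequence $\vc$ we therefore see $\vx=\vp_j$ and $\vy=\frac1j\Phi(\vc)$, where $\Phi(\vc)=\sum_i\phi(\te(c_i,i))$.

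The first key step is a \emph{genericity} choice of $\phi$. The only requirement on $\phi$ is its values on the finite set $\te(A\times[N])$, so I can perturb them freely. I pick them so that $\vc\mapsto\Phi(\vc)$ is injective on sequences of length $\le N$. I also require that no nontrivial linear coincidence of the form
\[
\tfrac{1}{r+k}\bigl(\Phi(\vb^1\text{ at }n{+}\cdot)-\Phi(\vb^2\text{ at }n{+}\cdot)\bigr)=\tfrac{1}{n+k}\bigl(\Phi(\vc^1)-\Phi(\vc^2)\bigr),\qquad r<n,
\]
holds unless both sides vanish. This is a finite union of proper linear subspaces of the parameter space, so a generic choice avoids it. Then a valid $\rho$ with $\func(\vc)=\rho(\Phi(\vc))$ still exists, since $\Phi$ is injective. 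Let $\mathcal V_j=\{\Phi(\vc)/j:|\vc|=j\}$, a finite set. The final FFN approximates a continuous $\tilde\rho(\vw,\vx,\vy,\vz)$ which equals $\rho(j\vy)+\lambda\min(\mathrm{dist}(\vy,\mathcal V_j),1)\,\ve_1$ near $\vx=\vp_j$, where $\lambda$ is a large constant. This is continuous because the $\vp_j$ are distinct and the sets are finite. In the uncompressed model $\vy\in\mathcal V_j$, so the penalty vanishes and approximation (Equation~\ref{eq:approximate_non_compressible}) follows exactly as in Lemma~\ref{lemma:compressible_main}. I would also make $\tilde\rho$ ignore $\vz$ entirely, so the compressor cannot smuggle in side information.

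Next comes non-compressibility for the prefix $\bar\va$. Take any policy with $\length(n)=r<n$. As in Proposition~\ref{prop:hist_non_compressible}, $\cl_1$ is irrelevant, and uniform attention gives, at the final suffix token,
\[
\vy'(\vb)=\frac{\vs+\Phi_n(\vb)}{r+k},
\]
where $\vs$ collects the (arbitrary) compressed values and depends only on $\bar\va$, and $\Phi_n(\vb)$ is $\Phi$ of $\vb$ placed at positions $n+1,\dots,n+k$. The position seen is still $\vp_{n+k}$. Suppose both $\vy'(\vb^1)$ and $\vy'(\vb^2)$ lay in $\mathcal V_{n+k}$, say equal to $\Phi(\vc^\iota)/(n+k)$ for $\iota=1,2$. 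Subtracting eliminates $\vs$ and produces exactly a coincidence excluded by genericity unless $\Phi_n(\vb^1)=\Phi_n(\vb^2)$. That equality is ruled out because $\func([\bar\va,\vb^1])\ne\func([\bar\va,\vb^2])$ forces $\vb^1\ne\vb^2$, and $\Phi$ is injective. Hence, for one of $\iota\in\{1,2\}$, $\vy'(\vb^\iota)\notin\mathcal V_{n+k}$.

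The main obstacle is turning this into a \emph{uniform} lower bound, because $\vs$ ranges over a continuum. I would handle it as follows. The map $\vs\mapsto\max_\iota\mathrm{dist}(\vy'(\vb^\iota),\mathcal V_{n+k})$ is continuous, positive everywhere, and tends to infinity as $\|\vs\|\to\infty$. It therefore has a positive minimum $\delta$; attainment of the minimum follows from coercivity plus compactness. For that $\vb^\iota$, the penalty contributes at least $\lambda\min(\delta,1)$ to the output. The remaining terms, namely $\rho$ evaluated at the bounded or clipped argument and the true $\func([\bar\va,\vb^\iota])$, are bounded by a constant $B$; to guarantee this I would clip $\rho$'s argument. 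Choosing $\lambda>(\const+B+\varepsilon)/\min(\delta,1)$ then gives Equation~\ref{eq:fully_compressible2}. One checkpoint: $\delta$ depends on $\phi$, $r$ and $k$ but not on $\lambda$. The construction and $\delta$ must be fixed before $\lambda$, and I would verify this order of choices carefully.
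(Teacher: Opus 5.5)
Your argument is sound at its core and follows a genuinely different route from the paper. The paper keeps the deep-sets pair $(\phi,\rho)$ and argues geometrically. Rescaled by the visible position $n+k$, the compressed statistic is $\alpha\bigl(\tilde\vv+\Phi_n(\vb)\bigr)$ with $\alpha=\frac{n+k}{\length(n)+k}>1$. The width argument of Lemma~\ref{lemma:convex_hull} then forces some suffix to land at distance at least $\frac{\alpha-1}{2}\operatorname{diam}(\conv)$ from the hull $\conv$ of the completions of $\bar\va$, and there the final FFN outputs a constant $\vu_0$ far from $\rho(\conv)$.

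You instead choose $\phi$ generically. This makes $\Phi$ injective, so $\rho$ can be defined on a finite set and the deep-sets theorem is not needed. It also ensures that the fixed nonzero difference $\frac{1}{r+k}\bigl(\Phi_n(\vb^1)-\Phi_n(\vb^2)\bigr)$ is never a difference of two points of $\mathcal{V}_{n+k}$. Each excluded set is a proper subspace exactly because $r<n$: at a position where $\vb^1,\vb^2$ differ, the formal coefficient is $(n+k)-(r+k)\epsilon\neq 0$ for $\epsilon\in\{-1,0,1\}$. Compactness then yields a uniform $\delta$. Take it as the minimum over the finitely many $r<n$, since $\lambda$ must be fixed before the policy.

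Your route buys two things. First, the tripwire vanishes on every genuine statistic at every length, so approximation is automatic. By contrast, the paper's trigger $i\vy\notin\conv_\delta$ is built from the single prefix $\bar\va$. As written, it also fires on uncompressed inputs whose statistic lies outside $\conv_\delta$ (other prefixes, other lengths). Once that trigger is restricted to avoid all genuine statistics, the width lemma alone no longer prevents the compressed statistics from landing on genuine statistics of other inputs. Your genericity condition is precisely what rules this out. Second, you use the hypothesis only through $\vb^1\neq\vb^2$, so your construction gives non-compressible implementations of every $\func$ once $|\alb|\ge 2$ and $N\ge 2$. In exchange, the paper's route gives an explicit $\delta$ in terms of $\operatorname{diam}(\conv)$ and $\alpha$ and needs no genericity.

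One step needs an explicit fix. The claim that ``the position seen is still $\vp_{n+k}$'' fails if you keep $\mW_O=\mI$ from Lemma~\ref{lemma:compressible_main}. The compressed values are arbitrary vectors in the full head dimension. Any policy with $\length(n)\ge 1$ can then translate the $\vx$-block at will, for instance out of the neighbourhoods of the $\vp_j$ on which $\tilde\rho$ is specified. Making $\tilde\rho$ ignore $\vz$ does not prevent this. Take $\mW_O$ to be the projection onto the third block, which is exactly the paper's first modification.

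Finally, one loose end is shared with the paper. The FFN approximates $\tilde\rho$ only on a compact set, whereas the compressed statistic ranges over all of $\sR^{d_1}$. Your coercivity bound therefore controls $\tilde\rho$, not the network, for large compressed values. The paper's $\vu_0$ branch has the same issue, so this is not a deficiency relative to its proof.
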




    

\begin{proof}

By the same argument as in the proof of Lemma~\ref{lemma:compressible_main}, there exist functions 
$\phi : \sR^{d_0} \to \sR^{d_1}$ and $\rho : \sR^{d_1} \to \sR^{d_{\mathrm{out}}}$ such that for every $\va = (a_1,\dots,a_n)$,
\begin{equation}
\label{eq:sum_decomposition_compressible}
\func(\va)
=
\rho\!\left(\sum_{i=1}^n \phi(\te(a_i,i))\right).
\end{equation}

Let $\bar{\va}$ be the prefix from the theorem statement, and let $\vb^1, \vb^2 \in A^k$ be suffixes such that
\begin{equation}
\func([\bar{\va}, \vb^1]) \neq \func([\bar{\va}, \vb^2]).
\end{equation}

It follows from equations \ref{eq:compressabillity_condition} and \ref{eq:sum_decomposition_compressible} that
\begin{equation}
\sum_{i=1}^k \phi(\te(b^{1}_i,i))
\;\neq\;
\sum_{i=1}^k \phi(\te(b^{2}_i,i)),
\end{equation}
where $\vb_1 = (b^{1}_1,\dots,b^{1}_k)$ and $\vb_2 = (b^{2}_1,\dots,b^{2}_k)$.

Define
\begin{equation}
\conv
=
\mathrm{conv}\!\left(
\left\{
\sum_{i=1}^k \phi(\te(b_i,i)) \;\middle|\; (b_1,\dots,b_k) \in A^k
\right\}
\right)
\;+\;
\sum_{i=1}^n \phi(\te(a_i,i)),
\end{equation}
and let $D$ denote the diameter of $\conv$. Since $\rho$ is continuous and $\conv$ is compact, the image $\rho(\conv)$ is also compact. Consequently, there exists $\vu_0 \in \sR^{d_{\mathrm{out}}}$ such that
\begin{equation}
\mathrm{dist}(\vu_0, \rho(\conv)) > C.
\end{equation}

We construct the transformer $\model$ similarly to the one constructed  in the proof of Lemma~\ref{lemma:compressible_main}, with two modifications to the second block.

First, we replace $\mW_O = \mI$ with
\begin{equation}
\mW_O=
\begin{pmatrix}
\mathbf{0} & \mathbf{0} & \mathbf{0} & \mathbf{0} \\
\mathbf{0} & \mathbf{0} & \mathbf{0} & \mathbf{0} \\
\mathbf{0} & \mathbf{0} & \mI & \mathbf{0} \\
\mathbf{0} & \mathbf{0} & \mathbf{0} & \mathbf{0}
\end{pmatrix}.
\end{equation}
Second, we modify the second feedforward function to approximate a continuous function $\bar{\rho}$ satisfying
\begin{equation}
\label{eq:non_compressable_ffn}
\bar{\rho}(\vw, \vx,\vy,\vz) =
\begin{cases}
\rho(i \cdot \vy),
& \text{if } \| \vx - \vp_i\| < \varepsilon'   \text{ and } i \cdot \vy \in \conv_{\frac{\delta}{2}} ,  \\[0.5em]
\vu_0,
& \text{if }  \| \vx - \vp_i\| < \varepsilon'   \text{ and } i \cdot \vy \notin \conv_{\delta},\\

\end{cases}
\end{equation}

where
\begin{equation}
\delta = \left(\frac{n+k}{n-1+k} - 1\right)\frac{D}{2},
\end{equation}
and
\begin{equation}
\conv_\varepsilon := \{\vz \in \sR^d : \mathrm{dist}(\vz,\conv) < \varepsilon\}.
\end{equation}

By the same reasoning as in Lemma~\ref{lemma:compressible_main}, Equation~\ref{eq:approximate_non_compressible} holds.

Let $\compress = (\cl_1, \cl_2)$ be any compression policy with compression budget $\length(n) < n$. Since the first block of $\model$ zeroes out all attention heads and depends only on the token embeddings, $\cl_1$ has no effect on the forward pass of $\model_{\compress,\va}(\vb)$. Thus, after the first block, the representations of the tokens $b_i$ are
\begin{equation}
\begin{bmatrix}
\tilde{\phi}(\te(b_1,1+n)) \\
\vdots \\
\tilde{\phi}(\te(b_k,k+n))
\end{bmatrix}.
\end{equation}

In the second block, since $\mW_Q = 0$, all attention scores are identical, and hence the attention weights are uniform, equal to $\frac{1}{k + \length(n)}$. It follows that the representation of each token $b_i$ after the second block is
\begin{equation}
\bar{\rho}\left( \phi(\te(\vb_i, i+n)), \vp_{i+n},
\frac{1}{\length(n)+k}\left(\tilde \vv + \sum_{j=1}^k \phi(\te(b_j,j+n))\right),
\mathbf{0}
\right),
\end{equation}
where the compressed prefix cache is given by
\begin{equation}
\cl_2(\mK_\va, \mV_\va) = (\tilde \mK_\va, \tilde \mV_\va) = ([\tilde \vk_1, \dots, \tilde \vk_{\length(n)}]^\top, [\tilde \vv_1, \dots, \tilde \vv_{\length(n)}]^\top)
\end{equation}
and
\begin{equation}
\tilde \vv =  \bigl(\sum_{i=1}^{\length(n)} \tilde \vv_i\bigr)_{2d_1: 3d_1}.
\end{equation}

Since $\frac{n+k}{\length(n)+k} >1$ by Lemma~\ref{lemma:convex_hull}, there exists a sequence $\bar{\vb}$ of length $k$ such that
\begin{equation}
\mathrm{dist}\!\left(
\frac{n+k}{\length(n)+k}\left(\vv + \sum_{j=1}^k \phi(\te(\bar{b}_j,j+n))\right),
\conv
\right)
>
\left(\frac{n+k}{\length(n)+k} - 1\right)\frac{D}{2}
\;\ge\;
\delta.
\end{equation}

Therefore, by Equation~\ref{eq:non_compressable_ffn}, the final representation of the token $\bar{b}_k$, which is the output of $\model_{\compress,\bar{\va}}(\bar{\vb})$, is equal to $\vu_0$.

On the other hand, by construction, $\model([\bar{\va}, \bar{\vb}]) \in \rho(\conv)$. Hence,
\begin{equation}
\|\model([\bar{\va}, \bar{\vb}]) - \model_{\compress,\bar{\va}}(\bar{\vb})\| > C,
\end{equation}
completing the proof.

\end{proof}

\begin{lemma}
\label{lemma:convex_hull}
Let $\vx_1,\dots,\vx_k \in \mathbb{R}^d$ be distinct points with $k\ge 2$, and let
\[
\conv := \operatorname{conv}\{\vx_1,\dots,\vx_k\}.
\]
Let $\alpha>1$. Then  for every $\vv \in \sR^d$ there exists $i\in [k]$ satisfying
\[
\operatorname{dist}(\alpha \vx_i+\vv,\conv)\ge \frac{\alpha-1}{2}\operatorname{diam}(\conv).
\]

\end{lemma}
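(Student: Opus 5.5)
The plan is a one-dimensional projection argument. Set $D = \operatorname{diam}(\conv)$. First I will check that the diameter of the polytope $\conv$ is attained at a pair of the generating points. The function $(\vy,\vz) \mapsto \|\vy - \vz\|$ is convex on $\conv \times \conv$, and that set is the convex hull of the pairs $(\vx_i,\vx_j)$, so the maximum is attained at some pair $(\vx_p, \vx_q)$ with $\|\vx_p - \vx_q\| = D$. If $D = 0$ the claim is trivial, since any $i$ works. So assume $D>0$, which holds because the points are distinct and $k \ge 2$, and set $\vu = (\vx_p - \vx_q)/D$.

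Next I will look at the linear functional $\vy \mapsto \langle \vu, \vy\rangle$ on $\conv$. Let $M = \max_{\conv}\langle \vu,\cdot\rangle$ and $m = \min_{\conv}\langle \vu,\cdot\rangle$. Then $M - m \ge \langle \vu, \vx_p - \vx_q\rangle = D$. Conversely, for any $\vy,\vz\in\conv$ Cauchy--Schwarz gives $\langle \vu, \vy - \vz\rangle \le \|\vy - \vz\| \le D$. Hence $M - m = D$ exactly: the projection of $\conv$ onto $\vu$ is an interval of length $D$.

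The key observation is a lower bound on the distance to $\conv$ through this projection. Since $\vu$ is a unit vector, every $\vy$ satisfies
\begin{equation}
\operatorname{dist}(\vy,\conv) \ge \max\bigl(\langle \vu,\vy\rangle - M,\; m - \langle \vu,\vy\rangle,\; 0\bigr).
\end{equation}
Call this quantity the excess $e(\vy)$ of $\vy$ over the interval $[m,M]$. Now take the two candidate points $\vy_p = \alpha\vx_p + \vv$ and $\vy_q = \alpha \vx_q + \vv$. Their projections differ by $\langle \vu, \vy_p - \vy_q\rangle = \alpha D$, and the shift $\vv$ cancels.

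Finally I will argue by contradiction. Suppose $e(\vy_p)$ and $e(\vy_q)$ were both less than $\frac{\alpha-1}{2}D$. Both projections would then lie in the interval $\bigl(m - \frac{\alpha-1}{2}D,\, M + \frac{\alpha-1}{2}D\bigr)$. That open interval has length $D + (\alpha-1)D = \alpha D$, so the two projections would differ by strictly less than $\alpha D$, which is a contradiction. Hence one of $i\in\{p,q\}$ satisfies $\operatorname{dist}(\alpha\vx_i + \vv, \conv) \ge \frac{\alpha-1}{2}D$.

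No step here is a real obstacle. The only point needing care is the first step, that the diameter is attained at generating points, and I would handle it by the convexity argument above rather than by compactness alone.
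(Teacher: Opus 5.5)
Your argument is correct. Its core is the same as the paper's: project onto the unit direction $\vu$ of a diameter-realizing pair, and compare the width $\alpha D$ of the scaled copy with the width $D+(\alpha-1)D=\alpha D$ of the $\varepsilon$-neighborhood of $\conv$, where $\varepsilon=\frac{\alpha-1}{2}D$.

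The decomposition differs. The paper picks diameter-realizing points in $\conv$ by compactness alone, so they need not be generators. It then argues by contradiction over all $k$ points. If every $\alpha\vx_i+\vv$ were within $\varepsilon$ of $\conv$, convexity of the open neighborhood $\conv_\varepsilon$ would give $\alpha\conv+\vv\subseteq\conv_\varepsilon$. The width of $\alpha\conv+\vv$ in direction $\vu$ would then be strictly below $D+2\varepsilon$.

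You instead show first, via convexity of $(\vy,\vz)\mapsto\|\vy-\vz\|$ on $\conv\times\conv=\operatorname{conv}\{(\vx_i,\vx_j)\}$, that the diameter is realized by two generators. After that, only $\alpha\vx_p+\vv$ and $\alpha\vx_q+\vv$ need to be compared, through the explicit bound $\operatorname{dist}(\vy,\conv)\ge e(\vy)$.

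This buys you two things. You never need convexity of $\conv_\varepsilon$, nor the strictness step the paper leaves implicit (that the compact set $\alpha\conv+\vv$ attains its extreme projections at points of the open set $\conv_\varepsilon$). You also get a sharper statement: the good index can always be taken from the fixed pair $\{p,q\}$, whatever $\vv$ is. The paper's route, in exchange, needs no fact about where a polytope's diameter is attained.

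Your first step can also be shortened. With $\vu$ built from any diameter-realizing pair in $\conv$, the maximum and minimum of $\langle\vu,\cdot\rangle$ over $\conv$ are attained at generators $\vx_p,\vx_q$. The identity $\langle\vu,\vx_p-\vx_q\rangle=M-m=D$ is all your contradiction uses.
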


\begin{proof}
Let
\[
D:=\operatorname{diam}(\conv)=\sup_{\vp,\vq\in \conv}\|\vp-\vq\|.
\]
Since the points $\vx_1,\dots,\vx_k$ are distinct and $k\ge 2$, we have $D>0$.

Choose $\vp,\vq\in \conv$ such that $\|\vp-\vq\|=D$ (such a pair exists since $\conv$ is compact), and define
\[
\vu:=\frac{\vp-\vq}{\|\vp-\vq\|}.
\]
The set $\{\langle \vu,\vx\rangle \mid \vx\in \conv \}$ forms a closed interval, and the width of $\conv$ in direction $\vu$ is
\[
\max_{\vx\in \conv}\langle \vu,\vx\rangle-\min_{\vx\in \conv}\langle \vu,\vx\rangle = D.
\]

For any $\vv \in \sR^d$, the width of $\alpha \conv+ \vv$ in direction $\vu$ is therefore
\[
\max_{\vx\in \alpha \conv+\vv}\langle \vu,\vx\rangle
-
\min_{\vx\in \alpha \conv+\vv}\langle \vu,\vx\rangle
=
\alpha D.
\]

Let
\[
\varepsilon := \frac{\alpha-1}{2}D.
\]
Suppose, for contradiction, that
\[
\operatorname{dist}(\alpha \vx_i+\vv,\conv)<\varepsilon
\qquad
\text{for all } i=1,\dots,k.
\]
Since
\[
\alpha \conv+\vv=\operatorname{conv}\{\alpha \vx_1+\vv,\dots,\alpha \vx_k+\vv\},
\]
and since the $\varepsilon$-neighborhood of $\conv$,
\[
\conv_\varepsilon:=\{\vz\in\mathbb{R}^d:\operatorname{dist}(\vz,\conv)<\varepsilon\},
\]
is convex, it follows that
\[
\alpha \conv+\vv \subseteq \conv_\varepsilon.
\]
Consequently, the width of $\alpha \conv+\vv$ in direction $\vu$ is strictly smaller than the width of $\conv_\varepsilon$ in direction $\vu$. This width is at most
\[
D+2\varepsilon.
\]
Therefore,
\[
\alpha D < D+2\varepsilon.
\]
But by the definition of $\varepsilon$,
\[
D+2\varepsilon
=
D+(\alpha-1)D
=
\alpha D,
\]
which is a contradiction. Hence there exists some $i\in\{1,\dots,k\}$ such that
\[
\operatorname{dist}(\alpha \vx_i+\vv,\conv)\ge \varepsilon.
\]
\end{proof}

\textbf{Note.} We believe that a finer-grained analysis of transformer compressibility may be possible through the theory of Chebyshev systems~\citep{karlin1966tchebycheff}, a classical framework with broad applications across both pure and applied mathematics \citep{karlin1966optimal,micchelli1977moment,eitan2021centered,karlin1966chebyshevian}. Chebyshev systems provide tools for controlling the zeros, sign changes, and interpolation structure of linear combinations of functions, which makes them a natural candidate for studying the degrees of freedom available in attention-like mixtures of value functions. We leave a systematic development of this connection to future work.

%% file: appendix/method.tex
\section{\ourmethod implementation details}\label{app:method}

\paragraph{Masked attention forward pass.}
Instead, at each layer group, the train-time KV sparsification policy defines which previous KV slots are visible to attention. For layer $\ell$, let $m_j^{(\ell)}$ be the most recent mask produced by the sparsification policy. The attention for query token $t$ is evaluated over
\begin{equation}
    \mathcal{A}_{t}^{\ell} =\{j\le t : m_j^{(\ell)}=1\}
\end{equation}
Thus all valid queries are still processed by the transformer, but their access to past KV slots is restricted to the active, unmasked KV slots. The masks are independent across compression points: a token dropped in one layer may be selected again in a later layer. No masking is used at evaluation time in our comparisons.

\paragraph{Router parameterization.}
For the main experiments in the paper, we use lightweight learnable router modules, parameterized as linear attention, as our KV sparsification policy. For a sequence $x_{1:T}$, let $h_t^\ell \in \sR^d$ denote the hidden state of token $t$ before decoder layer $\ell$. We insert routers at a small set of compression layers $\gC$ (e.g., for \textsc{Qwen2.5-0.5B} we use $\gC=\{1,7,13,19\}$). Each router independently predicts which KV slots are masked for the following group of layers. Given hidden states $H^\ell=(h_1^\ell,\ldots,h_T^\ell)$
and, the router first computes
\begin{equation}
    \tilde h_t = \mathrm{LN}(h_t^\ell),\qquad
    q_t = \phi(W_Q \tilde h_t),\quad
    k_t = \phi(W_K \tilde h_t),\quad
    r_t = W_V \tilde h_t,
    \qquad \phi(z)=\mathrm{ELU}(z)+1 .
\end{equation}
It then forms a causal linear-attention summary without storing a router KV
cache,
\begin{equation}
    S_t=\sum_{j\le t} k_j r_j^\top,\qquad
    z_t=\sum_{j\le t} k_j,\qquad
    a_t = W_O \frac{q_t^\top S_t}{q_t^\top z_t+\epsilon}.
\end{equation}
The keep probability is computed from a cosine score between a pointwise
projection of the current state and the state after the router summary:
\begin{equation}
    u_t=\frac{W_P h_t^\ell}{\|W_P h_t^\ell\|_2},\qquad
    w_t=\frac{h_t^\ell+\alpha a_t}{\|h_t^\ell+\alpha a_t\|_2},\qquad
    p_t=\frac{1-\langle u_t,w_t\rangle}{2}.
\end{equation}
The binary routing decision is
\begin{equation}
    m_t^\ell = \mathbf{1}\{p_t>\tau\},
\end{equation}
with $\tau=0.5$ in all reported runs. During training, this hard threshold is optimized with a straight-through estimator: the forward pass uses the binary mask above, while the backward pass passes gradients through the pre-threshold keep probability
$p_t$ to the router parameters. We initialize $W_P=-I$ and $\alpha=0$, which gives $p_t=1$ for every valid token, so training starts from the dense model.

\paragraph{Training objective.}
Let $p_\theta^{\mathrm{mask}}(\cdot\mid x_{<t})$ denote the masked forward pass and $p_\theta^{\mathrm{dense}}(\cdot\mid x_{<t})$ the same model with masking disabled. We optimize
\begin{equation}
    \begin{aligned}
    \mathcal{L}
    ={}&
    \lambda_{\mathrm{mask}}\frac{1}{T-1}\sum_{t=2}^{T}
    \KL\!\left(
    \operatorname{sg}\!\left[p_\theta^{\mathrm{dense}}(\cdot\mid x_{<t})\right]
    \,\middle\|\,
    p_\theta^{\mathrm{mask}}(\cdot\mid x_{<t})
    \right)
    \\
    &+
    \lambda_{\mathrm{anchor}}\frac{1}{T-1}\sum_{t=2}^{T}
    -\log p_\theta^{\mathrm{dense}}(x_t\mid x_{<t})
    +
    \lambda_{\mathrm{budget}}\frac{1}{|\mathcal{C}|}
    \sum_{c\in\mathcal{C}} \mathcal{B}_c .
    \end{aligned}
\end{equation}
where $\operatorname{sg}$ stops gradients through the dense teacher. When using the router based KV sparsification policy, we use $\lambda_\mathrm{budget} \neq 0$ and the budget term follows the load-balancing objective from~\citet{hwang2025hnet}. For
target keep rate $\rho$, define
\begin{equation}
    F_c=\frac{1}{T}\sum_{t=1}^{T} m_t^c,
    \qquad
    G_c=\frac{1}{T}\sum_{t=1}^{T} p_t^c,
    \qquad
    \mathcal{B}_c
    =
    \frac{F_cG_c}{\rho}
    +
    \frac{(1-F_c)(1-G_c)}{1-\rho}.
\end{equation}
All transformer and router parameters are updated during continued pretraining; in the reported runs $\rho=0.5$, $\lambda_{\mathrm{mask}}=1$,
$\lambda_{\mathrm{anchor}}=1$, and $\lambda_{\mathrm{budget}}=0.1$.


%% file: appendix/extended_experimental_details.tex
\section{Extended Experimental Details}\label{app:exp_details}
This section provides additional details for the empirical evaluation described in Section~\ref{sec:experiments}. We describe the continued-pretraining setup, including the data, architecture hyperparameters, and compression objective, and then give additional details on the post-hoc compression evaluations and experimental setups used throughout the paper. Unless otherwise stated, evaluations compare the base model to the \ourmethod checkpoint with masking disabled. All experiments were run on up to 8 H100 GPUs.

\begin{table*}[h]
  \centering
  \small
  \caption{\textbf{\ourmethod continued-pretraining hyperparameters for \textsc{Qwen2.5} checkpoints.}}
  \label{tab:training-hparams}
  \begin{tabular}{@{}p{0.31\textwidth}p{0.31\textwidth}p{0.31\textwidth}@{}}
    \toprule
    Setting & \textsc{Qwen2.5-0.5B} & \textsc{Qwen2.5-1.5B} \\
    \midrule
    Base checkpoint & \path{Qwen/Qwen2.5-0.5B} & \path{Qwen/Qwen2.5-1.5B} \\
    Training data & FineWeb-Edu train, streaming & FineWeb-Edu train, streaming \\
    Sequence length & 1024 tokens & 1024 tokens \\
    Compression layers & 0, 6, 12, 18 & 0, 7, 14, 21 \\
    Router feature dimensions & 64 & 64 \\
    Train threshold $\tau_{\mathrm{train}}$ & 0.5 & 0.5 \\
    Target keep rate & 50\% & 50\% \\
    Compression loss weight $\lambda_{\mathrm{mask}}$ & 1 & 1 \\
    Dense anchor weight $\lambda_{\mathrm{anchor}}$ & 1 & 1 \\
    Budget weight $\lambda_{\mathrm{budget}}$ & 0.1 & 0.1 \\
    Optimizer & AdamW & AdamW \\
    Weight decay & 0.01 & 0.01 \\
    Gradient clipping & 1.0 max norm & 1.0 max norm \\
    Peak learning rate & $10^{-4}$ & $10^{-4}$ \\
    Minimum learning rate & $5\times10^{-6}$ & $5\times10^{-6}$ \\
    Warmup steps & 600 & 600 \\
    Tokens per optimizer step & 131{,}072 & 131{,}072 \\
    Training hardware & 8$\times$H100 GPUs & 8$\times$H100 GPUs \\
    Training steps & 40k & 40k \\
    Training tokens & $5.24\times10^9$ & $5.24\times10^9$ \\
    \bottomrule
  \end{tabular}
\end{table*}
\subsection{Continued pretraining runs}\label{app:cpt}
We train two model sizes, initialized from the public \textsc{Qwen2.5-0.5B} and
\textsc{Qwen2.5-1.5B}~\citep{qwen2.5} checkpoints, using continued pretraining on FineWeb-Edu~\citep{penedo2024fineweb} at a context length of 1024. The compression-aware objective is the one described in the method section: a compressed self-distillation loss matched to the model's dense distribution, an uncompressed next-token prediction anchor, and a budget penalty. All transformer and compression module parameters are updated.

The router layers are inserted at layers 0, 6, 12, and 18 for \textsc{Qwen2.5-0.5B} and layers 0, 7, 14, and 21 for \textsc{Qwen2.5-1.5B}. Each module is a causal linear-attention boundary predictor (as described in Appendix~\ref{app:method}) with 64 feature dimensions, a threshold $\tau=0.5$, and a target keep rate of 50\%. The objective weights are $\lambda_{\mathrm{mask}}=1$, $\lambda_{\mathrm{anchor}}=1$, and $\lambda_{\mathrm{budget}}=0.1$. We use AdamW with peak learning rate $10^{-4}$, 600 warmup steps, minimum learning rate $5\times 10^{-6}$, weight decay 0.01, and gradient clipping at norm 1.0. Both model sizes use a batch of 131{,}072 tokens per optimizer step, corresponding to 128 sequences of length 1024, and are trained on 8 GPUs. Runs are configured for 40k optimizer steps, which see $5.24\times10^9$ tokens.

At evaluation time, the boundary predictors are turned off, and the post-hoc compressor named in each experiment is the only cache-reduction procedure whose quality is measured. Figure~\ref{fig:qwen0p5-training-diagnostics} reports the compressed and uncompressed validation NTP loss as well as the KV retention ratio throughout the \ourmethod continued pretraining for \textsc{Qwen2.5-0.5B}. Table~\ref{tab:training-hparams} reports all hyperparameters and training setup for both runs.

\begin{figure}[t]
  \centering
  \begin{tabular}{@{}c@{\hspace{0.01\linewidth}}c@{}}
    \subfigure[Dense NTP loss]{
      \includegraphics[width=0.48\linewidth]{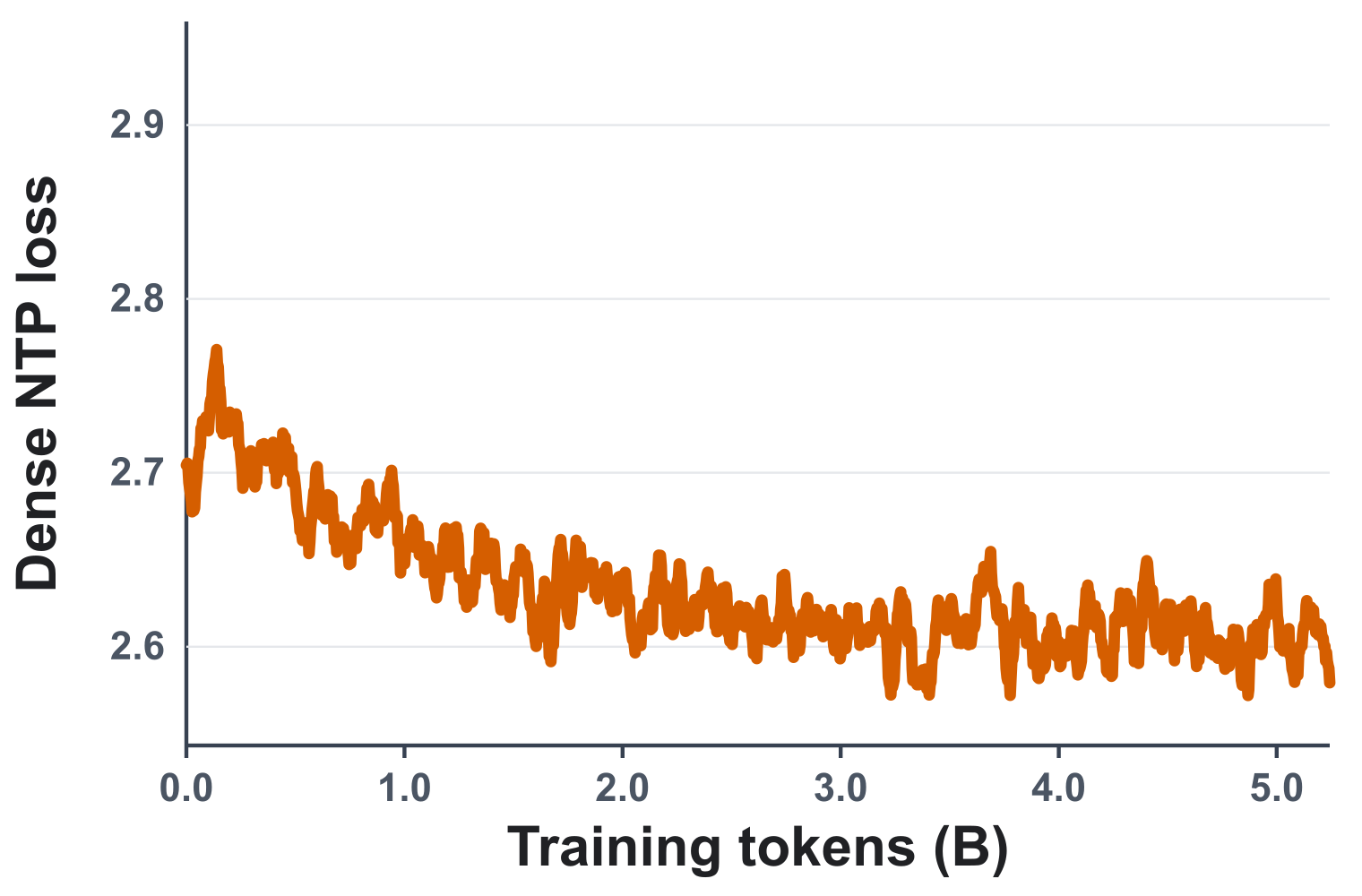}
    } &
    \subfigure[Compressed NTP loss]{
      \includegraphics[width=0.48\linewidth]{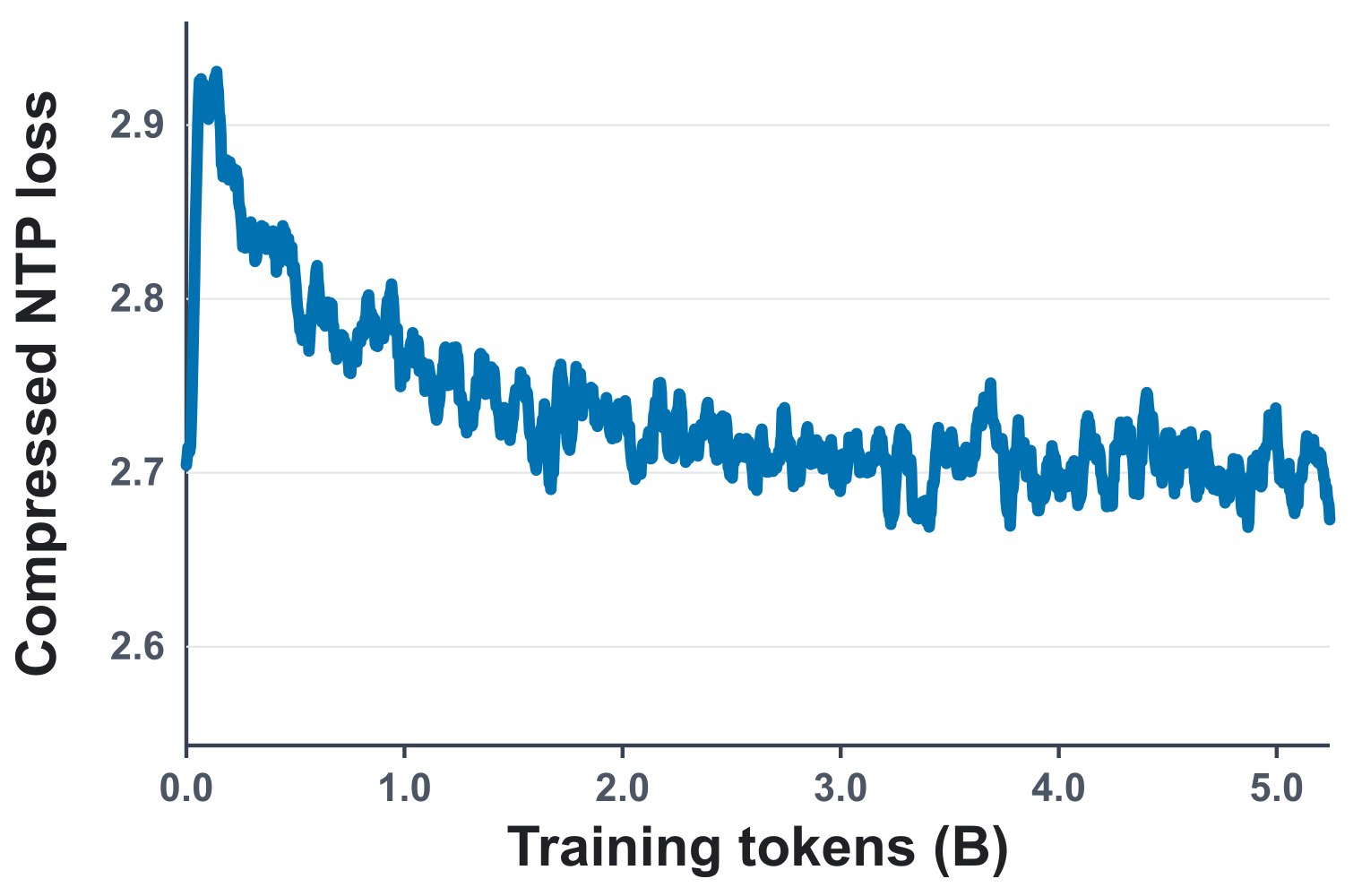}
    } \\
    \multicolumn{2}{c}{
      \subfigure[KV retention ratio]{
        \includegraphics[width=0.50\linewidth]{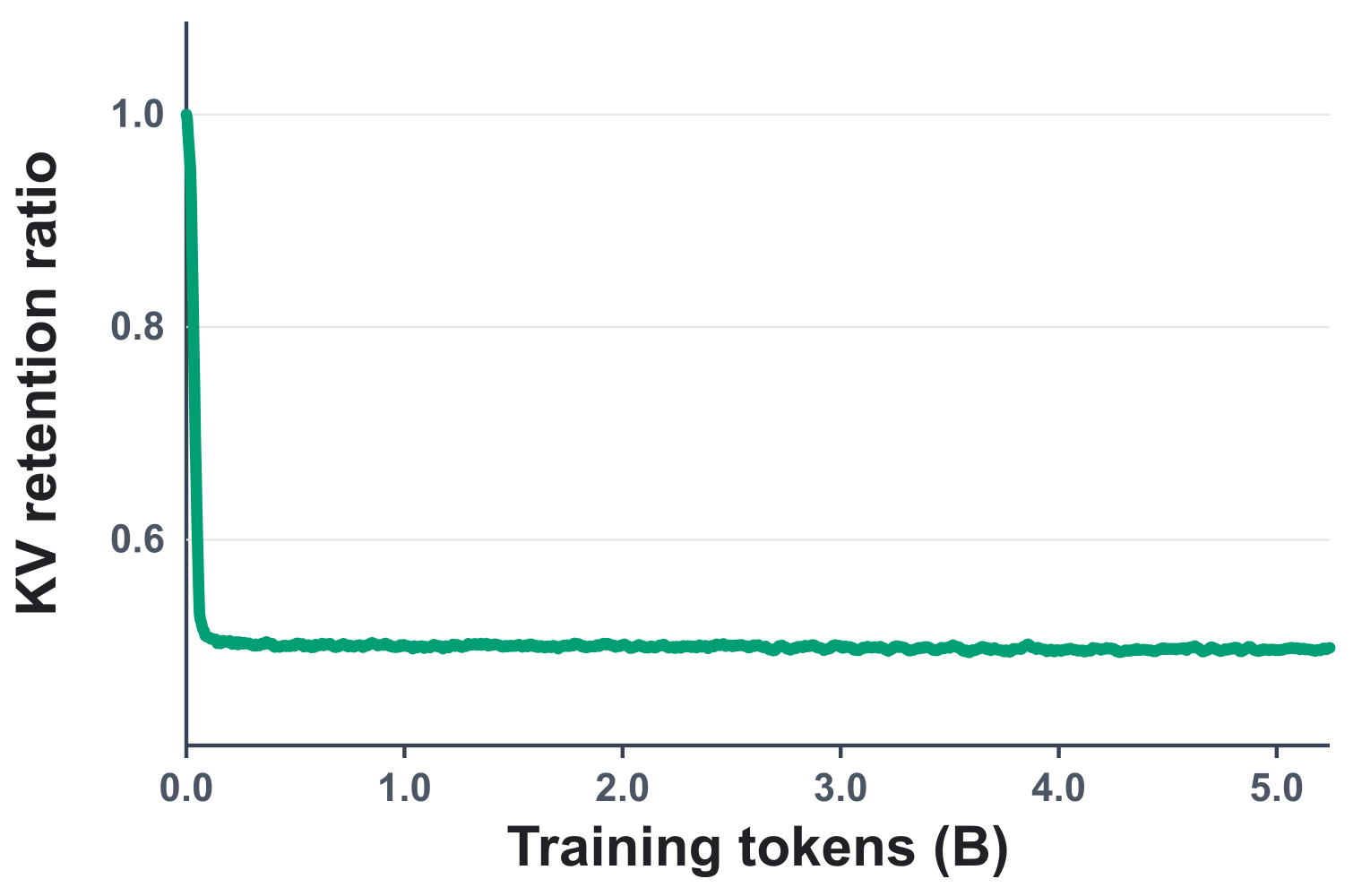}
      }
    }
  \end{tabular}
  \caption{\textbf{\textsc{Qwen2.5-0.5B} \ourmethod continued pretraining run.} We plot dense validation next-token-prediction loss, compressed-path validation next-token-prediction loss, and the realized KV retention ratio over the 5.24B-token \ourmethod continued-pretraining run. The retention trace reports the fraction of KV slots kept by the learned routers, whose budget target is 50\%.}
  \label{fig:qwen0p5-training-diagnostics}
\end{figure}

\subsection{No compression QA evaluation}
To check that our training procedure does not substantially degrade ordinary model behavior, we evaluate normalized multiple-choice accuracy on a variety of question answering benchmarks using the LightEval harness~\citep{habib2023lighteval}. The evaluation suite contains HellaSwag~\citep{zellers2019hellaswag}, WinoGrande~\citep{sakaguchi2020winogrande}, PIQA~\citep{bisk2020piqa}, Social IQa~\citep{sap2019socialiqa}, OpenBookQA~\citep{mihaylov2018openbookqa}, ARC-Easy and ARC-Challenge~\citep{clark2018arc}. We use 1000 validation examples per task, except for OpenBookQA, whose validation split contains 500 examples in this setup. The reported average is the unweighted mean over these eight tasks. The purpose of this evaluation is to measure whether the trained checkpoints maintain language modeling performance on standard short-context multiple-choice tasks before we test their behavior under hoc KV cache compression.

\subsection{Suffix perplexity under prefix KV cache compression}\label{app:suffix}
The suffix-prediction experiments evaluate whether the continued-pretrained checkpoints are easier targets for post-hoc prefix KV cache compression. The in-domain evaluation data consists of held-out blocks from FineWeb. Each evaluation example is a 1024-token excerpt split into a 768-token prefix and a 256-token suffix. We first run the model on the full 1024-token sequence and record the suffix logits and per-layer attention traces. The first 768 tokens define the prefix cache to be compressed. The following 256 suffix tokens are never removed or compressed: after a compact prefix cache has been constructed, the same suffix tokens are appended normally and scored under the original transformer layers. This protocol is shared by the Attention Matching table and the gradient based optimization step-curve figures, but the two experiments fit the compact cache differently.

\paragraph{Attention-Matching.}
We use the AM-HighestAttnKeys Attention matching variant from~\citet{zweiger2026fastkv}. For a keep ratio $\rho \in \{0.05, 0.1, 0.2, 0.4\}$, Attention Matching constructs a compact prefix cache independently for each layer and key-value head.  Let $K,V \in \mathbb{R}^{L \times d}$ denote the dense prefix keys and values for one key-value head, where $L=768$, and let $Q \in \mathbb{R}^{M \times d}$ denote suffix query states from the corresponding grouped query heads. We subsample at most 256 suffix queries per key-value head. Attention Matching first selects $\lceil \rho L \rceil$ source keys with the largest root-mean-square attention weight over the sampled suffix queries. The selected keys form the compact keys $C_1$. A scalar log-bias vector $\beta$ is then fit by two iterations of box-constrained nonnegative least squares so that the compact cache approximately matches the dense attention normalizer. Finally, compact values $C_2$ are fit by ridge least squares to reconstruct the dense attention outputs,
\[
  \operatorname{softmax}\!\left(\frac{Q C_1^\top}{\sqrt{d_k}} + \beta\right) C_2
  \approx
  \operatorname{softmax}\!\left(\frac{Q K^\top}{\sqrt{d_k}}\right) V .
\]
We use ridge coefficient $10^{-4}$ with spectral scaling and solve the value regression with a least-squares solver. After constructing the compact prefix cache, we run the model on the same prefix-suffix sequence while replacing the dense prefix cache with the compact cache for all subsequent suffix positions. We report  the increase in suffix perplexity relative to the same model's dense-prefix forward pass, KL divergence to the native dense-prefix logits, and top-1 agreement with the native logits.  All metrics are averaged over 128 evaluation examples.

\paragraph{Gradient-based KV cache compression.}
In this setting, we directly optimize a small set of continuous key and value vectors for each example while keeping all model parameters fixed.  The resulting optimization curves measure how much information about the prefix can be preserved in a compact KV cache when the cache itself is allowed to adapt to the example. The dense model is first run on the full prefix-suffix sequence to produce teacher logits on the suffix. For a target keep ratio
\[
  \rho \in \{0.05,0.1,0.15,0.2,0.25,0.3,0.35,0.4,0.5\},
\]
we allocate a compact prefix cache with
\[
  m = \left\lceil \rho L \right\rceil
\]
KV slots per layer and key-value head, where \(L=768\) is the dense prefix length. The compact cache consists of learnable key and
value tensors
\[
  \widetilde{K}_{\ell,h}, \widetilde{V}_{\ell,h} \in \mathbb{R}^{m \times d}
\]
for each layer \(\ell\) and key-value head \(h\).   We initialize \(\widetilde{K} _{\ell,h}\) and \(\widetilde{V}_{\ell,h}\) from the first \(m\) dense prefix KV states for that layer and head. The compact cache is optimized per example using the suffix teacher distribution from the dense full-prefix forward pass. Let \(z_t^\star\) denote the dense teacher logits at suffix position \(t\), and let \(z_t(\widetilde{K},\widetilde{V})\) denote the logits obtained when the dense prefix cache is replaced by the compact cache.  We minimize
\[
  \mathcal{L}_{\mathrm{KV}}
  =
  \frac{1}{|\mathcal{S}|}
  \sum_{t \in \mathcal{S}}
  D_{\mathrm{KL}}\!\left(
      \operatorname{softmax}(z_t^\star)
      \,\middle\|\,
      \operatorname{softmax}(z_t(\widetilde{K},\widetilde{V}))
  \right),
\]
where \(\mathcal{S}\) is the set of suffix prediction positions.  The optimization updates only the compact KV tensors; model weights
are never updated.  We use AdamW for 100 gradient steps with learning rate \(10^{-2}\), weight decay \(0\), and gradient clipping at
norm \(1.0\).  We record intermediate compact caches after steps
\[
  \{0,1,2,5,10,20,50,100\}
\]
to produce the optimization-step curves. After each recorded optimization step, we evaluate next-token prediction on the same suffix using the compact cache in place of the dense prefix cache. The reported perplexity is computed against the ground-truth suffix tokens. 

This evaluation should be interpreted as a controlled probe of the cache-construction problem in post-hoc compression. For each example, the compact prefix cache is fit using supervision derived from the same prefix--suffix sequence on which it is evaluated: Attention Matching uses suffix attention/query traces to construct the compact cache, while the gradient-based method uses suffix teacher logits. Thus, the comparison asks whether the trained checkpoints make the per-example compact-cache optimization problem easier under matched cache budgets and matched supervision, rather than testing transfer from a cache fit on one suffix to unseen suffixes (which is tested in the next experiments).

\subsection{Needle-in-a-haystack retrieval under KV cache compression} \label{app:niah-kv-compression}
We evaluate retrieval under prefix KV cache compression using a deterministic needle-in-a-haystack task. Each example contains neutral filler text, passkey-like distractor numbers, a single six-digit passkey, and a final query asking for the passkey. The prompt length before the final query is 1024 tokens. The model is evaluated using the compressed prompt and the query.

\paragraph{Example construction.}
To generate each example, we sample a six-digit string \(a \in \{\texttt{000000},\ldots,\texttt{999999}\}\), which defines the passkey stored in the prefix and the answer expected at evaluation. For example, if \(a=\texttt{483920}\), the needle inserted into the haystack is
\[
\texttt{Memory record: special\_passkey=483920.}
\]
The final query is
\[
\texttt{Memory record: special\_passkey=}
\]
and the answer is exactly
\[
\texttt{483920}.
\]
The passkey is inserted at relative depths
\[
  d \in \{0,0.25,0.5,0.75,1\},
\]
where \(d=0.0\) places the needle at the beginning of the haystack and \(d=1\) places it at the end. In addition to the answer passkey, examples contain distractors: passkey-like six-digit strings that are explicitly not equal to the answer. These distractors are inserted into the sequence so that the model cannot solve the task by copying an arbitrary number. For example, if the answer
is \(\texttt{483920}\), a possible distractor sentences is
or
\[
\texttt{Ignore the misleading special passkey candidate 672041.}
\]
During example construction, the haystack is assembled from filler text before and after the needle. At each filler-sentence draw, with probability \(0.35\) we insert a distractor sentence; otherwise we insert a neutral
sentence unrelated to the retrieval key. Distractor codes are sampled uniformly from the six-digit code space and rejected if they match the answer. The needle is then inserted at the requested relative depth in the haystack, and the query is appended after the haystack. For each compression ratio, we evaluate 20 instances per depth totaling at a 100 examples.

\paragraph{Training the compact KV cache.}
For each example, we train a new compact prefix KV cache while keeping all model weights frozen. Let \texttt{<haystack>} denote the haystack prefix, including the filler text, needle and any distractors, but excluding the final query and answer. We construct an
reconstruction sequence
\[
    \texttt{<haystack><instruction><haystack>},
\]
where the instruction is
\[
    \texttt{Reproduce the preceding passage verbatim.} 
\]
\[
    \texttt{Preserve every word, digit, and punctuation mark.}
\]
The compact cache replaces the KV states for the first \(\texttt{<haystack>}\). The loss is applied only on the second \(\texttt{<haystack>}\), so the optimizer must store enough information in the compact cache to reconstruct the original prefix, including the passkey and distractors.

For a keep ratio \(\rho\), the compact cache contains
\[
m = \lceil \rho L \rceil
\]
KV slots per layer and KV head, where \(L\) is the haystack length. The compact keys and values are initialized by selecting $m$ random haystack KV states and then optimized directly with AdamW. As in Appendix~\ref{app:suffix}, we use a KL objective between the suffix logits (computed on the second copy of \texttt{<haystack>}) produced by the dense model and the ones produced when attending to the compact cache. For each example, this objective is optimized by running AdamW for 100 steps with learning rate \(10^{-2}\), weight decay \(0\), and gradient clipping at norm \(1.0\). 

\paragraph{Evaluation.}
At evaluation time, the haystack prefix is represented by the optimized compact KV cache, while the final query is appended normally and remains uncompressed. We then perform greedy answer generation and compare the generated completion against the six-digit passkey. In the table, we report native-success-conditioned exact-match retrieval accuracy: for each model, keep ratio, and compact-cache optimization step, we report the percentage of those examples for which the compressed-prefix model also generates exactly the passkey.


\subsection{Long-form question answering under KV cache compression} \label{app:lbv2-cache-reconstruction}
We evaluate long-context question answering under post-hoc KV cache compression on LongBench v2. Because the compression procedure optimizes a separate compact KV cache for each prompt, this evaluation is substantially more expensive than ordinary decoding. We therefore report results on a fixed subset of seven LongBench v2 subdomains: Academic, Agent history QA, Knowledge graph reasoning, Legal, Many-shot learning, New language translation, and Table QA. This subset contains 221 examples in total.

For each LongBench example, we construct a multiple-choice prompt from the context, question, and four answer choices. The prefix is
\[
\texttt{Please read the following text and answer the question below.}
\]
followed by the context from the benchmark. The suffix is
\[
\texttt{What is the correct answer to this question: <question>}
\]
followed by the four choices \texttt{(A)}, \texttt{(B)}, \texttt{(C)}, and \texttt{(D)}, and the answer-format prefix
\[
\texttt{The correct answer is (}.
\]
We score the four continuations \texttt{A)}, \texttt{B)}, \texttt{C)}, and \texttt{D)} by total log-likelihood and predict the answer with the highest score. No free-form generation is used.

For each example, model, and keep ratio $\in \{0.1, 0.2, 0.5\}$, we optimize a compact KV cache for example's context while keeping all model weights frozen. The optimization uses a reconstruction sequence as in Appendix~\ref{app:niah-kv-compression}. As common in LongBench evaluations, if the reconstruction sequence length is larger than the context length of the model (32{,}768 tokens) we truncate the context using middle truncation (keeping a prefix and suffix of equal length)~\citep{bai2025longbench}. The KC objective is a KL divergence between the dense model's next-token distribution and the compact-cache model's next-token distribution on all tokens of the repeated prefix. The compressed caches are initialized from the first prefix KV slots. We optimize each compact cache for 100 AdamW steps with learning rate $10^{-2}$, weight decay 0, gradient clipping norm 1.0, and gradient checkpointing enabled. At evaluation time, the optimized compact KV cache replaces the full KV cache of the context prefix. The question, answer choices, and answer prefix are then processed normally, without compression. We report accuracy, grouped by subdomain and averaged over the 221 examples.

%% file: appendix/additional_experimental_results.tex
\section{Additional Experimental Results}\label{app:add_exp}
This section collects supplementary results that are complementary to Section~\ref{sec:experiments}. In Section~\ref{app:method-ablation}, we ablate the choice of the train-time sparsification policy used in \ourmethod. In Section~\ref{app:cross-domain-am}, we test whether the Attention Matching performance gains we observe in Section~\ref{sec:experiments} generalize to prefix--suffix from additional text corpora.

\subsection{Train-time KV sparsification policy ablation}\label{app:method-ablation}
We use this ablation to choose the train-time KV sparsification policy used in the main experiments. Starting from \textsc{Qwen2.5-0.5B}, we train three checkpoints with the same self-distillation and dense-anchor losses and the same 50\% target keep rate, varying only the policy that selects active KV slot during the masked forward pass. \textsc{Rand} uniformly samples an exact-count 50\% subset of valid KV slots at each sparsification point. \textsc{Attn} is an H2O-inspired~\citep{zhang2023h2o} attention-mass baseline: it computes dense causal attention at the routed layer, sums the attention mass received by each source token over heads and valid query positions, and keeps the top 50\% source KV slots. \textsc{Router} uses the lightweight learned linear-attention routers described in Appendix~\ref{app:method}. Because \textsc{Rand} and \textsc{Attn} satisfy the target keep rate by construction, we set $\lambda_\mathrm{budget}=0$ for those runs; for \textsc{Router}, we use the budget regularizer described in Appendix~\ref{app:method}. In all comparisons below, the trained checkpoints are evaluated in the unmasked forward pass mode, with the training-time sparsification mechanism disabled, and post-hoc compression is applied afterward.

\begin{table*}[t]
    \centering
    \small
    \setlength{\tabcolsep}{4pt}
    \caption{\ourmethod implemented with a \textsc{Router} policy retains base model performance better than the fixed \textsc{Rand} and \textsc{Attn} policies.}\label{tab:ab-reasoning}
    \vspace{10pt}
    \resizebox{\linewidth}{!}{
        \begin{tabular}{llcccccccc}
        \toprule
        Model & Variant & HellaSwag & WinoGrande & PIQA & OpenBookQA & ARC-E &
        ARC-C & Avg. \\
        \midrule
        \multirow{4}{*}{\textsc{Qwen2.5-0.5B}}
          & Base & \textbf{54.6} & 52.9 & 70.4 & \textbf{38.2} & 60.8
        & 32.2 & 51.5  \\
          & \textsc{Router} & 53.6 & \textbf{54.1} & \textbf{72.1} & 37.4 &
        \textbf{63.6} & \textbf{32.4} & \textbf{52.2}  \\
          & \textsc{Rand}& 52.6 & 52.9 & 70.2 & 35.4 & 61.8 & 31.1 & 50.7 \\
          & \textsc{Attn} & 50.7 & 53.6 & 70.0 & 35.6 & 60.2 & 30.9 & 50.2 \\
        \bottomrule
        \end{tabular}
    }
    \vspace{-3mm}
\end{table*}
\paragraph{Uncompressed performance.}
\textsc{Router} is the only sparsification policy in this ablation that preserves dense downstream accuracy at or above the base model average. Its average score is 52.2, compared to 51.5 for the base model, 50.7 for \textsc{Rand}, and 50.2 for \textsc{Attn}. The fixed policies still retain much of the base model's zero-shot performance after continued pretraining.

\paragraph{Attention Matching Evaluation.}
All checkpoints are evaluated with the same Attention Matching procedure described in Appendix~\ref{app:suffix}. Table~\ref{tab:am_random_vs_ours} shows that all three train-time sparsification policies Attention Matching performance compared to the original base checkpoint for every reported metric and budget. This further supports the \ourmethod framework, demonstrating that the downstream compression improvements are not tied exclusively to learned router parameterization. Among the policies, \textsc{Router} gives the best result for every reported keep ratio and metric. The margin over \textsc{Rand} is sometimes small, especially at 10\% and 40\% keep ratio, but \textsc{Router} is consistently at least as good under compression and is clearly stronger on dense task retention. We therefore use \textsc{Router} in the main experiments as the default train-time sparsification policy.

\begin{table}[h]
    \centering
    \caption{
    Attention-matching prefix compression on FineWeb-Edu for the base model and three \ourmethod models with the \textsc{Router}, \textsc{Rand}, and \textsc{Attn} KV sparsification policies. We report degradation relative to each model's native dense-prefix forward pass. Lower $\Delta$PPL and KL are better; higher top-
  1 agreement is better.
    }
    \label{tab:am_random_vs_ours}
    \resizebox{\linewidth}{!}{
        \begin{tabular}{c cccc cccc cccc}
        \toprule
        Keep ratio
        & \multicolumn{4}{c}{$\Delta$PPL ($\downarrow$)}
        & \multicolumn{4}{c}{KL ($\downarrow$)}
        & \multicolumn{4}{c}{Top-1 ($\uparrow$)} \\
        \cmidrule(lr){2-5}
        \cmidrule(lr){6-9}
        \cmidrule(lr){10-13}
        & Base & \textsc{Router} & \textsc{Rand} & \textsc{Attn}
        & Base & \textsc{Router} & \textsc{Rand} & \textsc{Attn}
        & Base & \textsc{Router} & \textsc{Rand} & \textsc{Attn} \\
        \midrule
        0.05 & 0.780 & \textbf{0.461} & 0.547 & 0.593 & 0.0562 & \textbf{0.0385} & 0.0420 & 0.0440 & 88.5\% & \textbf{91.1\%} & 90.3\% & 90.3\% \\
        0.10 & 0.662 & \textbf{0.422} & 0.427 & 0.495 & 0.0483 & \textbf{0.0321} & 0.0356 & 0.0363 & 89.3\% & \textbf{91.4\%} & 90.9\% & 90.8\% \\
        0.20 & 0.777 & \textbf{0.438} & 0.547 & 0.594 & 0.0549 & \textbf{0.0374} & 0.0409 & 0.0396 & 88.7\% & \textbf{90.8\%} & 90.4\% & 90.5\% \\
        0.40 & 0.592 & \textbf{0.360} & 0.361 & 0.386 & 0.0431 & \textbf{0.0272} & 0.0283 & 0.0288 & 90.3\% & \textbf{92.5\%} & 92.3\% & 92.4\% \\
        \bottomrule
        \end{tabular}
    }
\end{table}

\subsection{Cross-domain attention matching evaluation}\label{app:cross-domain-am}
We test whether the Attention Matching optimization gains observed for \ourmethod transfer to prefix--suffix pair from additional textual domains. We evaluate Attention Matching KV cache compression on validation examples from three held-out corpora: WikiText-103~\citep{merity2016pointer}, PG-19~\citep{rae2019compressive}, and arXiv~\citep{cohan2018discourse}. For each example, we take a 768-token prefix and a 256-token suffix, compress only the prefix KV cache with the same Attention Matching procedure used in Appendix~\ref{app:suffix}, and then score suffix next-token prediction against each model's own dense full-prefix distribution.

Across all three held-out domains, the compression-aware checkpoint remains a better target for the same post-hoc compressor. It reduces KL and increases top-1 agreement for every corpus and keep ratio. It also lowers $\Delta$PPL in 11 of 12 settings; the only exception is arXiv at 40\% keep ratio, where $\Delta$PPL increases even though KL and top-1 agreement still improve. We therefore interpret this table as evidence that \ourmethod improves cross-domain compressibility under Attention Matching.

\begin{table*}[H]
    \centering
    \small
    \setlength{\tabcolsep}{4pt}
    \caption{\textbf{Attention Matching cross-domain completion under prefix-cache compression.}
    We compare the base \textsc{Qwen2.5-1.5B} model and the \ourmethod-trained \textsc{Qwen2.5-1.5B} checkpoint, on held-out prefix--suffix pair from different domains. Each example contains a 768-token prefix and a 256-token suffix; Attention Matching compresses only the prefix KV cache. Metrics are computed relative to each model's native dense full-prefix suffix distribution, so lower $\Delta$PPL and KL and higher top-1 agreement are better.}
    \vspace{6pt}
    \label{tab:cross-domain-am}
    \begin{tabular}{@{}ccc|ccc@{}}
    \hline
    Corpus & Keep ratio & Variant & $\Delta$PPL ($\downarrow$) & KL ($\downarrow$) & Top-1 ($\uparrow$) \\
    \hline
    \multirow{8}{*}{arXiv}
     & \multirow{2}{*}{5\%} & Base & 1.659 & 0.1875 & 83.4\% \\
     &  & \ourmethod & \textbf{1.530} & \textbf{0.1644} & \textbf{84.5\%} \\
     & \multirow{2}{*}{10\%} & Base & 1.292 & 0.1580 & 83.8\% \\
     &  & \ourmethod & \textbf{1.176} & \textbf{0.1340} & \textbf{85.6\%} \\
     & \multirow{2}{*}{20\%} & Base & 1.618 & 0.1836 & 83.5\% \\
     &  & \ourmethod & \textbf{1.243} & \textbf{0.1488} & \textbf{84.7\%} \\
     & \multirow{2}{*}{40\%} & Base & \textbf{2.002} & 0.1907 & 84.5\% \\
     &  & \ourmethod & 2.842 & \textbf{0.1599} & \textbf{86.1\%} \\
    \hline
    \multirow{8}{*}{PG-19}
     & \multirow{2}{*}{5\%} & Base & 0.735 & 0.0609 & 88.2\% \\
     &  & \ourmethod & \textbf{0.514} & \textbf{0.0371} & \textbf{89.6\%} \\
     & \multirow{2}{*}{10\%} & Base & 1.508 & 0.0913 & 86.8\% \\
     &  & \ourmethod & \textbf{0.736} & \textbf{0.0457} & \textbf{88.8\%} \\
     & \multirow{2}{*}{20\%} & Base & 1.801 & 0.1078 & 86.1\% \\
     &  & \ourmethod & \textbf{1.123} & \textbf{0.0604} & \textbf{87.9}\% \\
     & \multirow{2}{*}{40\%} & Base & 1.319 & 0.0807 & 88.1\% \\
     &  & \ourmethod & \textbf{0.587} & \textbf{0.0394} & \textbf{90.1\%} \\
    \hline
    \multirow{8}{*}{WikiText-103}
     & \multirow{2}{*}{5\%} & Base & 1.654 & 0.1217 & 85.8\% \\
     &  & \ourmethod & \textbf{1.532} & \textbf{0.0565} & \textbf{90.9\%} \\
     & \multirow{2}{*}{10\%} & Base & 6.568 & 0.1589 & 84.8\% \\
     &  & \ourmethod & \textbf{1.242} & \textbf{0.0498} & \textbf{91.5\%} \\
     & \multirow{2}{*}{20\%} & Base & 1.703 & 0.1493 & 84.5\% \\
     &  & \ourmethod & \textbf{1.381} & \textbf{0.0591} & \textbf{90.8\%} \\
     & \multirow{2}{*}{40\%} & Base & 1.447 & 0.1437 & 86.3\% \\
     &  & \ourmethod & \textbf{1.269} & \textbf{0.0481} & \textbf{91.7\%} \\
    \hline
    \end{tabular}
  \end{table*}

\subsection{Additional KV keep-ratio curves}
Figure~\ref{fig:qwen0p5_ppl_plots} extends the main-body optimization curves in Figure~\ref{fig:qwen0p5_ppl_plots_main} with three additional KV keep ratios: 15\%, 25\%, and 35\%.

\begin{figure}[H]
    \centering
    \includegraphics[width=\linewidth]{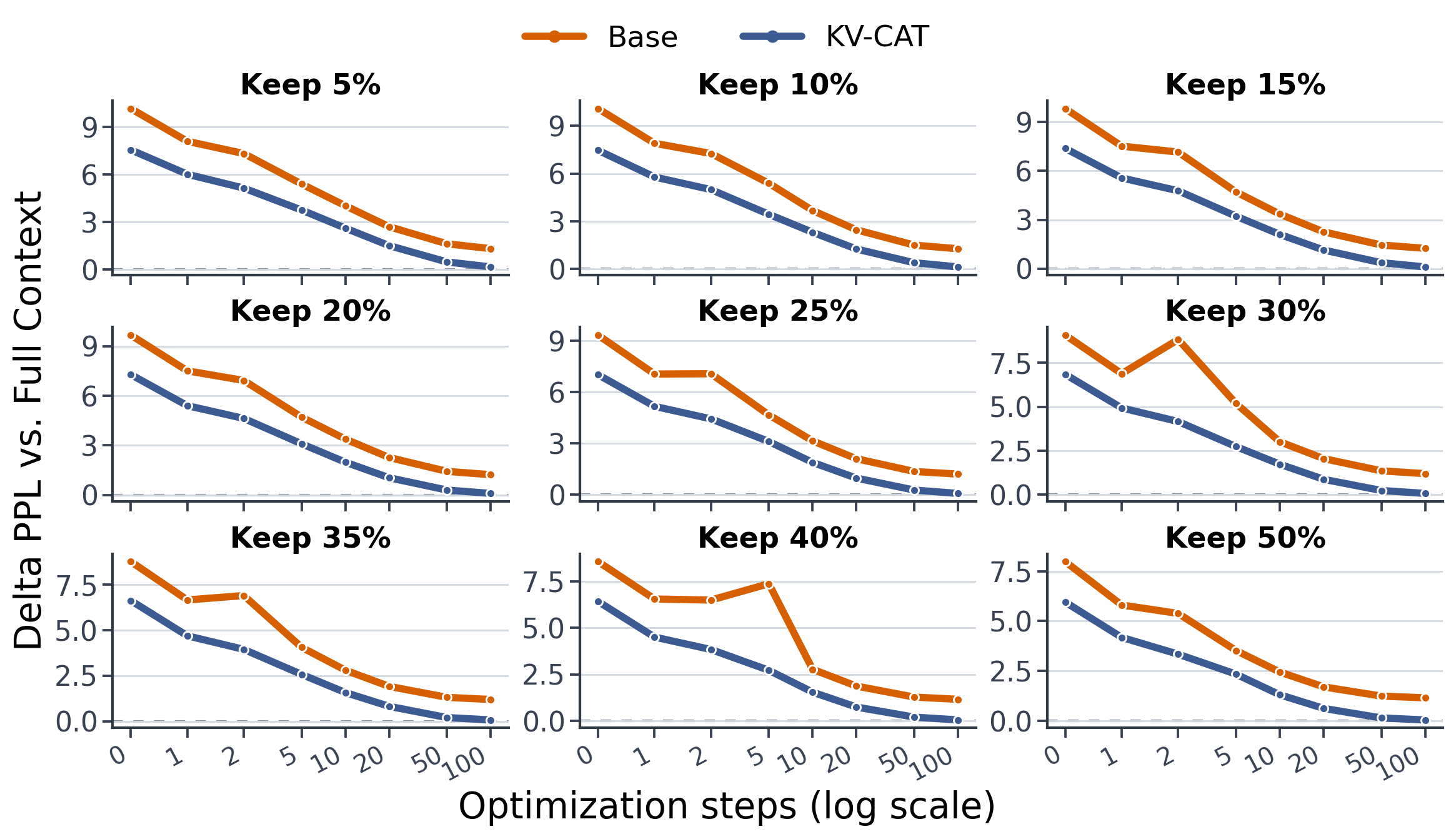}
    \caption{\textbf{\ourmethod speeds up gradient-based KV cache compression.} We plot the gap in suffix perplexity under full/compressed-prefix inference throughout gradient-based KV cache optimization. Each panel fixes a different KV keep ratio. Across ratios, the \ourmethod checkpoint achieves a comparable $\Delta$PPL in fewer optimization steps than the base model, yielding up to a \textbf{5}$\times$ speedup.}\label{fig:qwen0p5_ppl_plots}
    \vspace{-4mm}
    \label{fig:full-opt}
\end{figure}

%% file: appendix/limitations.tex
\section{Limitations}
\label{app:limitations}

Our approach relies on continued pretraining with additional routing modules, introducing non-trivial training overhead and potentially limiting applicability in low-resource settings. The routing mechanism and training objective also add implementation complexity, which may hinder integration into existing production systems. Finally, our evaluation is limited to a small set of model sizes and tasks, and it remains to be seen how well the method generalizes to larger-scale models or different domains. 

%% file: appendix/broader_impact.tex
\section{Broader Impact}
\label{app:broader_impact}
\paragraph{Broader Impact.}
This work is primarily methodological, aiming to improve the efficiency of transformer-based language models by enabling more effective KV cache compression. On the positive side, our approach can reduce memory and compute requirements for long-context inference, making large models more accessible and energy-efficient, and enabling their deployment in resource-constrained settings. Potential negative impacts are indirect: by improving the efficiency of existing LLMs, our method may lower the cost of deploying such models at scale, which could amplify both beneficial and harmful downstream applications. However, our method does not introduce new capabilities or alter model behavior beyond improving compatibility with compression, and therefore does not create new misuse vectors beyond those already present in the underlying models. Mitigation strategies largely align with existing practices for responsible LLM deployment, including monitoring, access control, and adherence to usage policies of the underlying models.